\newcommand{\ubar}[1]{\underaccent{\bar}{#1}}
      \newtheorem{definition}{Definition}
      \newtheorem{theorem}{\textbf{Theorem}}[section]
      \newtheorem{lemma}{\textbf{Lemma}}[section]
      \newtheorem{corollary}{Corollary}[section]
\begin{document}

\title{Decentralized Distributed Expert Assisted Learning (D2EAL) approach for cooperative target-tracking}

\author{Shubhankar Gupta$^{*}$ and Suresh Sundaram$^{*}$
\thanks{$^{*}$Shubhankar Gupta ({\tt\small shubhankarg@iisc.ac.in}) and Suresh Sundaram ({\tt\small vssuresh@iisc.ac.in}) are affiliated with the Artificial Intelligence and Robotics Lab (AIRL), Department of Aerospace Engineering, Indian Institute of Science, Bengaluru, Karnataka, India}}

\maketitle

\begin{abstract}
This paper addresses the problem of cooperative target tracking using a heterogeneous multi-robot system, where the robots are communicating over a dynamic communication network, and heterogeneity is in terms of different types of sensors and prediction algorithms installed in the robots. The problem is cast into a distributed learning framework, where robots are considered as \lq agents' connected over a dynamic communication network. Their prediction algorithms are considered as \lq experts' giving their look-ahead predictions of the target's trajectory. In this paper, a novel Decentralized Distributed Expert-Assisted Learning (D2EAL) algorithm is proposed, which improves the overall tracking performance by enabling each robot to improve its look-ahead prediction of the target's trajectory by its information sharing, and running a weighted information fusion process combined with online learning of weights based on a prediction loss metric. Theoretical analysis of D2EAL is carried out, which involves the analysis of worst-case bounds on cumulative prediction loss, and weights convergence analysis. Simulation studies show that in adverse scenarios involving large dynamic bias or drift in the expert predictions, D2EAL outperforms well-known covariance-based estimate/prediction fusion methods, both in terms of prediction performance and scalability. 
\end{abstract}

\begin{IEEEkeywords}
Distributed Online Learning, Decentralized Learning, Cooperative target trajectory prediction, Heterogeneous multi-robot system, Large dynamic prediction bias 
\end{IEEEkeywords}

\section{Introduction}
With the advent of advanced sensor/communication technologies,  computer vision, deep learning algorithms, and reliable robotic platforms (Unmanned aerial, ground, surface, underwater vehicles), there has been an increased interest among researchers in the area of cooperative Multi-Robotic Systems (MRS). Since the associated technologies are getting cheaper, smaller, and more reliable, MRS are attractive for usage in unsafe and uncertain environments \cite{mohiuddin2020survey}. MRS have a wide range of applications such as search and rescue \cite{scherer2015autonomous}, firefighting \cite{harikumar2018multi}, convoy protection \cite{spry2005convoy}, traffic monitoring \cite{khan2020smart}, surveillance, etc., and all these applications involve tracking a target as one of the fundamental tasks.    

One of the important problems in target tracking using a robot equipped with a suitable sensor suite is the look-ahead trajectory prediction \cite{hao2018review}. Based on predicted trajectory, the robot plans its path to ensure desirable tracking performance. Thus, a target tracking problem can be divided into three phases forming a loop: prediction, path planning, and control. Instead of using a single robot, having multiple robots tracking the target can be superior in better predicting the target's future trajectory while providing more comprehensive coverage of the search area that is being monitored for the target tracking purpose \cite{khan2016cooperative}. Especially, the cooperative multi-robot scenarios where each robot's prediction of the target trajectory is different, owing to heterogeneity in terms of the sensor suite and/or prediction algorithm, can lead to more robust tracking performance \cite{rizk2019cooperative}. 

The literature on target tracking using MRS mainly deals with the path planning and/or control aspects of the tracking task. Most of the recent works address target tracking either by coordinated control via formation flying \cite{ma2015cooperative}, \cite{sun2018collaborative} or region based approach \cite{jung2006cooperative}. In \cite{subbarao2017target}, cooperative tracking is addressed in a dynamically changing communication network topology. Here, the tracking problem is converted into cooperative control using pinning control technique and consensus on target states, assuming that only pinned UAVs know the target location \cite{wang2014pinning}. In \cite{wang2011cooperative}, distributed Kalman filter is used for target's position estimation, and distributed flocking control for tracking and collision avoidance. In \cite{hausman2016cooperative}, a centralized cooperative control algorithm for target tracking is presented which involves target position estimation from onboard sensing in each of the UAVs. 

Considering the prediction aspect of the tracking task, it is beneficial for the robots in a MRS to have installed sensors and prediction algorithms that complement each other \cite{rizk2019cooperative}. This nature of heterogeneity in robots due to different sensor suite, prediction algorithms, and environmental uncertainty, thus, influences the accuracy of target's trajectory prediction by individual robots. Hence, there is a need to develop a cooperative information fusion algorithm that minimizes the uncertainty in target's trajectory prediction by each robot in the MRS. 

In MRS applications involving multi-sensor fusion, distributed multiple estimate/prediction fusion is mainly performed using the well-known fusion methods like Kalman Filter/Fusion (KF) \cite{maybeck1982stochastic}, \cite{uhlmann2003covariance}, Covariance Intersection (CI) \cite{matzka2009comparison}, \cite{julier2017general}, and Covariance Union (CU) \cite{matzka2009comparison}, \cite{reece2010generalised}. \cite{weng2012bayesian} proposes a Bayes framework based Fusion (BF) algorithm which outperforms the CI in simulations. Based on the CI algorithm, \cite{carrillo2013decentralized} proposes an approximate decentralized multi-robot cooperative localization algorithm, with reduced processing and communication costs, thereby maintaining consistency while handling asynchronous communication constraints. In \cite{assa2015kalman}, a nonlinear KF-based sensor fusion framework is proposed which is based on an adaptation technique that compensates system noise variations, and an iterative scheme that deals with the fast system dynamics. \cite{chang2021resilient} utilizes CI explicitly in the communication update of their proposed multi-robot localization in order to ensure estimation consistency and enhance resilience. In \cite{daass2021design}, three different data fusion architectures based on the KF and the CI are studied, and it is shown that the partially distributed architecture exhibits best stability, and lowest computing and communication costs. \cite{wang2021fault} proposes a fully decentralized multi-robot cooperative localization algorithm based on CU, where CU is used to handle spurious sensor data in the fusion process to make sure the fused estimates stay consistent. All of these covariance-based fusion methods usually involve assumptions regarding consistency and correlation among the multiple estimates being fused; KF requires the estimates to be uncorrelated, CI and BF require that the estimates being fused are consistent, and CU requires one of the estimates being fused to be consistent. Moreover, covariance-based methods work based on the covariance information of the estimates being fused, thus, requiring their covariance information as an input. In adverse scenarios involving dynamic and potentially large biases or drift in the estimates/predictions being fused, these covariance-based methods may not perform satisfactorily, or may even fail. Thus, there is a need for algorithms which are designed to effectively handle large dynamic biases or drift in the estimates/predictions, and do not require any covariance information of the predictions being fused. 

In this paper, the problem of cooperatively tracking a target using heterogeneous MRS is formulated into a distributed online learning framework inspired by the works in distributed learning \cite{sahu2016distributed, sahu2017dist}. Both of these works propose a centralized distributed learning framework in which multiple agents aim to predict a sequence or a signal while communicating over a network. In practice, centralized framework is not scalable, and can be infeasible due to limited communication bandwidth, communication channel size constraints, and/or information storage constraints. Moreover, such a centralized agent acts as the single point of failure for the multi-agent system. Thus, we cast the problem of cooperative target tracking using a heterogeneous MRS into a decentralized distributed learning framework. Here, robots are considered as `agents' connected over a dynamic communication network. A prediction algorithm is present in each of the robots as an `expert', providing look-ahead prediction of the target's trajectory to the robot. The expert predictions can be different due to differences in sensors, prediction algorithms, and/or environmental uncertainty. We propose a Decentralised Distributed Expert Assisted Learning (D2EAL) algorithm enabling each robot to cooperatively track the target accurately. The decentralized nature of D2EAL handles scalability issues and avoids any single point of failure occurrences, thus bringing resilience into the MRS. D2EAL involves robots learning from their past prediction experiences including those shared by their neighbouring robots in the communication network, while utilizing a weighted information fusion process, thereby improving tracking accuracy. The relative weights are updated based on an exponential weight scheme similar to \cite{cesa2006prediction}.   

D2EAL is analyzed theoretically in terms of worst case upper bounds on the cumulative loss incurred by each robot, and the worst-case bounds are shown to be growing sub-linearly with the time horizon. Further, convergence analysis of the weights in D2EAL is carried out, and it is shown that the weights do converge under certain reasonable assumptions. The performance of D2EAL is then evaluated using a simulated environment with an adverse setting. In this setting, D2EAL is compared against three baseline decentralized fusion methods - Mean, Median, Greedy-Local, and four well-known decentralized fusion methods - KF, CI, BF, and CU. The simulation results clearly indicate that D2EAL outperforms the baseline and the well-known fusion methods, with a substantial margin. Further, a scalability simulation study shows that D2EAL performs significantly better than all these seven fusion methods.  

The rest of this paper is organised as follows: section II presents problem formulation and a novel distributed learning framework for heterogeneous multi-robot target tracking, along with the proposed D2EAL algorithm. Section III presents theoretical analysis of the D2EAL algorithm. Section IV presents results on the proposed algorithm's performance and scalability via two simulation studies. Finally, section V concludes this paper. 
\section{Decentralised Distributed Expert-Assisted Learning}
In this section, we first describe the problem of cooperative target tracking using a heterogeneous Multi-Robot System (MRS). Next, a novel distributed learning framework for cooperatively tracking a target is proposed. Finally, we present the Decentralized Distributed Expert-Assisted Learning (D2EAL) algorithm. 
\nomenclature{\(A_i\)}{$i^{th}$ robot's prediction algorithm}
\nomenclature{\(x_{t,i}\)}{$i^{th}$ robot's 2-D position vector (in $m$)}
\nomenclature{\(\bar{v}_{t,i}\)}{$i^{th}$ robot's body-axis velocity vector ($m/s$)}
\nomenclature{\(\phi_{t,i}\)}{$i^{th}$ robot's heading angle (radians)}
\nomenclature{\(\bar{w}_{t,i}\)}{$i^{th}$ robot's yaw rate ($rad/s$)}
\nomenclature{\(x_{t,T_g}\)}{target's 2-D position vector (in $m$)}
\nomenclature{\(\bar{v}_{t,T_g}\)}{target's body-axis velocity vector ($m/s$)}
\nomenclature{\(\phi_{t,T_g}\)}{target's heading angle (radians)}
\nomenclature{\(\bar{w}_{t,T_g}\)}{target's yaw rate ($rad/s$)}
\nomenclature{\(\bar{v}_{t,i}^R\)}{$i^{th}$ robot's velocity reference command signal}
\nomenclature{\(\Delta \bar{v}_{t,i}\)}{$i^{th}$ robot's velocity correction control signal}
\nomenclature{$\hat{x}_{(t+\tau / t),T_g}^{i}$}{$i^{th}$ robot's $\tau$-step look-ahead prediction of target's position} 
\nomenclature{\(\hat{x}_{(t+\tau / t),T_g}^{A_i}\)}{$\tau$-step look-ahead prediction of target's position given by algorithm $A_i$}
\nomenclature{\(\zeta_{t,i}^{\tau}\)}{drift in algorithm $A_i$'s $\tau$-step look-ahead prediction} 
\nomenclature{\(\{y_t\}_{t=1}^{T}\)}{unknown discrete-time target sequence, $y_t \equiv x_{t,T_g}$} 
\nomenclature{\(f_{t+1,i}\)}{one-step look-ahead prediction of the target sequence as given by the $i^{th}$ expert, $f_{t+1,i} \equiv \hat{x}_{(t+1/t),T_g}^{A_i}$}
\nomenclature{\(f_{t+\tau,i}^p\)}{$\tau$-step look-ahead prediction of the target sequence as given by the $i^{th}$ expert, $f_{t+\tau,i} \equiv \hat{x}_{(t+\tau|t),T_g}^{A_i}$} 
\nomenclature{\(\hat{f}_{t+1,i}\)}{$i^{th}$ agent's one-step look-ahead prediction of the target sequence, $\hat{f}_{t+1,i} \equiv \hat{x}_{(t+1/t),T_g}^{i}$}
\nomenclature{\(\hat{l}_{t,i}\)}{$i^{th}$ agent's prediction loss: $l(\hat{f}_{t,i},y_t)$}
\nomenclature{\(\hat{L}_{t,i}\)}{$i^{th}$ agent's cumulative prediction loss: $\sum_{s=1}^{t} \hat{l}_{s,i}$}
\nomenclature{\(l_{t,i}\)}{$i^{th}$ expert's prediction loss: $l(f_{t,i},y_t)$}
\nomenclature{\(L_{t,i}\)}{$i^{th}$ expert's cumulative prediction loss: $\sum_{s=1}^{t} l_{s,i}$}
\nomenclature{\(\Omega_i(t)\)}{$i^{th}$ agent's neighbour set as per the communication network at time $t$}
\printnomenclature
\subsection{Problem Formulation}
The scenario of cooperative target-tracking with a heterogeneous MRS (shown in Fig.\ref{fig:01}) involves a target whose trajectory is being predicted by multiple heterogeneous robots that cooperate with each other over a communication network. The target dynamics is unknown to the robots.
Each robot is installed with a sensor suite and a data-driven prediction algorithm to predict the target's trajectory based on its sensor information. Further, the target is observable to all the robots. Heterogeneity in the MRS is in terms of different types of sensors and prediction algorithms that are installed in the robots. These prediction algorithms can exhibit different prediction accuracy for different parts of the target's trajectory.    
\begin{figure}[h]
    \centering
    \includegraphics[width=0.45\textwidth]{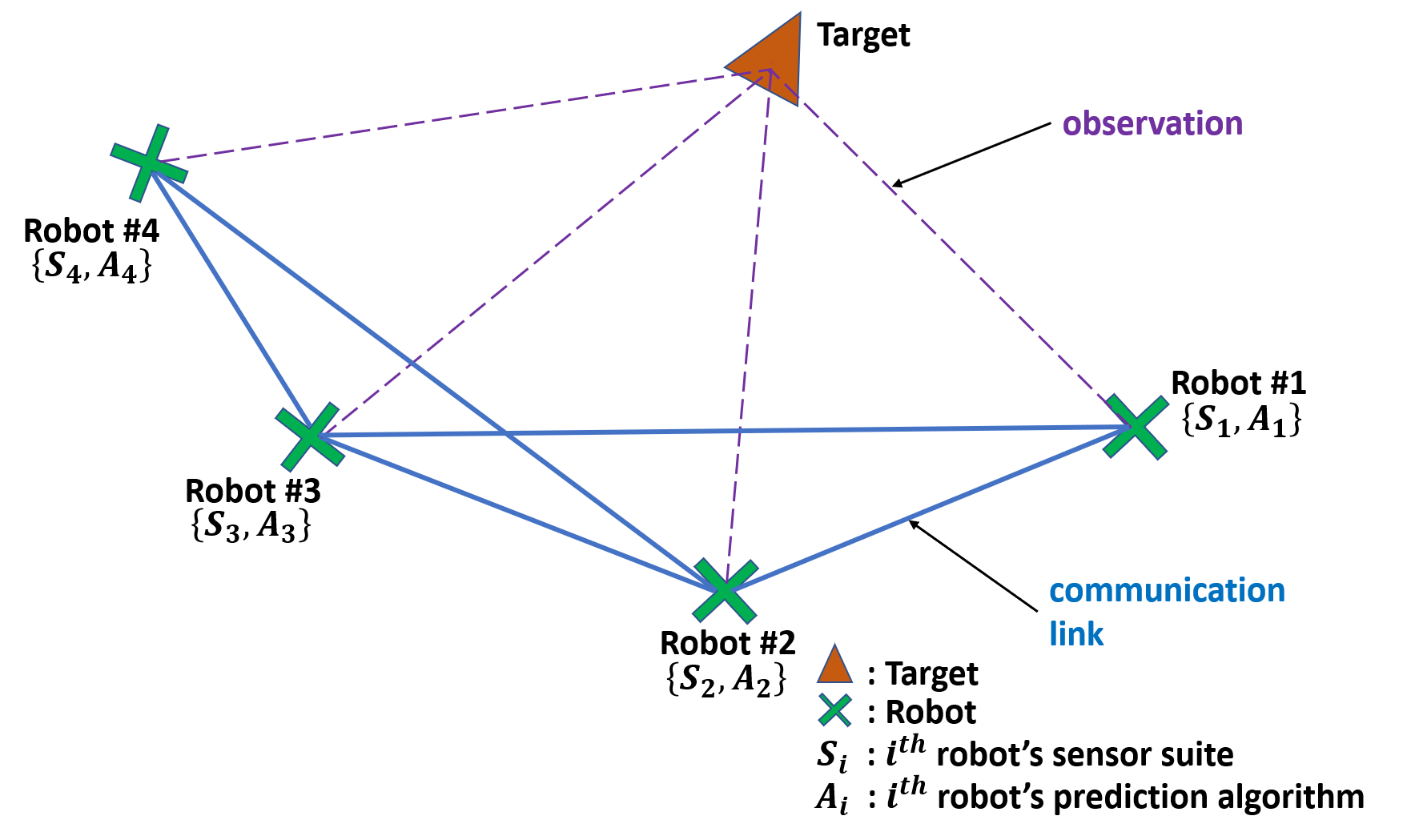}
    \caption{Heterogeneous Multi-Robot Cooperative Target-Tracking Scenario}
    \label{fig:01}
\end{figure}

The robots that are directly connected via a communication channel (or communication link) with each other in pairs can be called neighbouring robots. The topology of the dynamic communication network is represented by an underlying bi-directional dynamic graph $G(t)$, where $t$ is the discrete-time variable. The robots can communicate information with their neighbours only once between two successive observations of the target's location. The robots can infer only from their neighbours and are not aware of the overall communication connectivity graph, i.e., the robots only have local knowledge of the communication network. 

Let $N$ denote the total number of robots in the MRS, and let each robot be represented by its index $i$, where $i \in [N]$. The robots are equipped with a data-driven prediction algorithm that predicts look-ahead trajectory of the target by processing real-time data from the on-board sensors. Lets denote $i^{th}$ robot's prediction algorithm as $A_i$, $\forall i \in [N]$. The collection of algorithms $\{A_i\}_{i=1}^{N}$ is considered to be heterogeneous, i.e., $i^{th}$ robot's algorithm $A_i$ is different from $j^{th}$ robot's algorithm $A_j$, $i\neq j$ and $\forall i,j \in [N]$; the algorithms can be of different class (or type), or same class but different parameters. This implies that the prediction accuracy of these algorithms is likely to be different from each other for different parts of the target's trajectory. 

\textit{Robot Model}:
Consider the following discrete time 3-DOF kinematic model for the $i^{th}$ robot, where $\Delta T$ is the sampling period (seconds), $\forall i \in [N]$
\begin{subequations} \label{eqt01}
\begin{align}
    x_{t+1,i} &= x_{t,i} + \Delta T \begin{bmatrix}
               \cos{\phi_{t,i}} & -\sin{\phi_{t,i}} \\
               \sin{\phi_{t,i}} & \cos{\phi_{t,i}}
               \end{bmatrix} \bar{v}_{t,i} \\
    \phi_{t+1,i} &= \phi_{t,i} + \Delta T \bar{w}_{t,i} 
\end{align}
\end{subequations}
where $x_{t,i} \in \mathbb{R}^2$ is the $i^{th}$ robot's 2-D position vector (in $m$), $\bar{v}_{t,i} \in \mathbb{R}^2$ is the $i^{th}$ robot's body-axis velocity vector ($m/s$), $\phi_{t,i} \in \mathbb{R}$ is the $i^{th}$ robot's heading angle (radians), and $\bar{w}_{t,i} \in \mathbb{R}$ is $i^{th}$ robot's yaw rate ($rad/s$) at discrete-time $t$, respectively. Here, the body-axis velocity $\bar{v}_{t,i}$ and yaw rate $\bar{w}_{t,i}$ act as bounded control inputs for the $i^{th}$ robot. 

\textit{Target Model}: 
The target model is similar to the robot model. The target's position vector $x_{t,T_g} \in \mathbb{R}^2$ (in $m$), heading angle $\phi_{t,T_g} \in \mathbb{R}$ (radians), body-axis velocity $\bar{v}_{t,T_g} \in \mathbb{R}^2$ (m/s), and yaw rate $\bar{w}_{t,T_g} \in \mathbb{R}$ ($rad/s$), respectively, can be represented by replacing $i$ with $T_g$ in the set of equations (\ref{eqt01}). Similarly, $\bar{v}_{t,T_g}$ and $\bar{w}_{t,T_g}$ act as bounded control inputs for the target at time $t$, which are considered unknown to the robots. 
 
 \textit{Translational Control Law}: 
 For the $i^{th}$ robot, the translational control law consists of two terms as given below
 \begin{equation} \label{eqt02.1}
     \bar{v}_{t,i} = \bar{v}_{t,i}^R + \Delta \bar{v}_{t,i}
 \end{equation}
 where $\bar{v}_{t,i}^R$ is the $i^{th}$ robot's reference command signal responsible for chasing the target, and $\Delta \bar{v}_{t,i}$ is the $i^{th}$ robot's correction control signal responsible for avoiding collisions with other robots. 
 
 Denote $R_{t,i} \in \mathbb{R}^{2\times 2}$ as the $i^{th}$ robot's body-global rotation matrix at time $t$, defined as $R_{t,i} = \begin{bmatrix} \cos{\phi_{t,i}} & -\sin{\phi_{t,i}} \\ \sin{\phi_{t,i}} & \cos{\phi_{t,i}} \end{bmatrix}$. 
 
 The $i^{th}$ robot's reference command signal $\bar{v}_{t,i}^R$ is given as
 \begin{equation} \label{eqt02.2}
     \bar{v}_{t,i}^R = k_1 R_{t,i}' \frac{\Delta \hat{x}_{(t+\tau|t),T_g}^{i}}{||\Delta \hat{x}_{(t+\tau|t),T_g}^{i}||} (||\Delta x_{t,T_g}^{i}|| - d_S)
 \end{equation}
 where $(\cdot)'$ represents the transpose operation, $||\cdot||$ is the 2-norm or the Euclidean norm, $k_1 > 0$ is a control parameter. $\Delta x_{t,T_g}^{i} := x_{t,T_g} - x_{t,i}$, where $x_{t,T_g}$ is the target's position vector at time $t$, and $x_{t,i}$ is the $i^{th}$ robot's position vector at time $t$. $d_S > 0$ ($m$) is a parameter indicating the distance each robot should maintain from the target while chasing it. 
  Here, $\Delta \hat{x}_{(t+\tau|t),T_g}^{i}$ is defined as 
 \begin{equation} \label{eqt02.3}
    \Delta \hat{x}_{(t+\tau|t),T_g}^{i} := \hat{x}_{(t+\tau|t),T_g}^{i} - x_{t,i} 
 \end{equation} 
 where $\hat{x}_{(t+\tau|t),T_g}^{i}$ is the $i^{th}$ robot's $\tau$-step look-ahead prediction of target's position at time $t$, and $x_{t,i}$ is the $i^{th}$ robot's position at time $t$.
 
 Further, we assume that each robot is equipped with a collision avoidance system, which makes sure that while chasing the target, robots do not collide with each other. Considering eq.(\ref{eqt02.1}), this behavior can be modeled by the correction control signal $\Delta \bar{v}_{t,i}$ for the $i^{th}$ robot by using an \textit{inter-robot collision avoidance} control law given as follows:
\begin{equation} \label{eqt02.411}
    \Delta \bar{v}_{t,i} = -k_2 R_{t,i}' \frac{x_{t,p_t^i} - x_{t,i}}{||x_{t,p_t^i} - x_{t,i}||^2}
\end{equation}
where $(\cdot)'$ represents the transpose operation, $||\cdot||$ is the 2-norm or the Euclidean norm, $k_2 > 0$ is a control parameter, $p_t^i \in [N]\setminus \{i\}$ is the index of the robot spatially nearest to $i^{th}$ robot at time $t$, formally defined as $p_t^i := \arg \min_{j\in [N]\setminus \{i\}} ||x_{t,j} - x_{t,i}||$. Thus, $x_{t,p_t^i}$ is the position vector of the robot spatially nearest to the $i^{th}$ robot at time $t$. 
 

\textit{Heading Control Law for the $i^{th}$ robot}:
Consider a heading angle requirement for the robots; robots are required to yaw in such a way that their heading direction should point towards their $\tau$-step look-ahead estimate of target's position $\hat{x}_{(t+\tau|t),T_g}^{i}$. The angle between $\Delta \hat{x}_{(t+\tau|t),T_g}^{i}$ (from eq.\ref{eqt02.3}) and the $i^{th}$ robot's heading direction $h_{t,i} = {\begin{bmatrix} \cos \phi_{t,i} & \sin \phi_{t,i} \end{bmatrix}}'$, with respect to the $\Delta \hat{x}_{t+\tau,T_g}^{i}$ direction, can be obtained as $\Delta \phi_{t,err}^{i} = atan2\left(h_{t,i} \times \Delta \hat{x}_{t+\tau,T_g}^{i}, h_{t,i} \cdot \Delta \hat{x}_{t+\tau,T_g}^{i}\right)$, where the first argument involves a cross-product and the second argument involves dot-product. As per the heading angle requirement, $i^{th}$ robot's yaw control law can be given as
\begin{equation} \label{eqt03}
    \bar{w}_{t,i} = k_3 \Delta \phi_{t,err}^{i}
\end{equation}
where $k_3 > 0$ is a control parameter. 

\textit{Abstract Model for Prediction Algorithm of the $i^{th}$ robot}:
Lets denote $\hat{x}_{(t+\tau|t),T_g}^{A_i} \in \mathbb{R}^2$ as the $\tau$-step look-ahead prediction of target's position, given by algorithm $A_i$ at time $t$. We use a simplified model for algorithm $A_i$'s prediction, which reflects the likeliness of its prediction accuracy to be different from other algorithms $\{A_j\}_{\forall j \in [N]\setminus \{i\}}$, as follows:
\begin{equation} \label{eqt04}
        \hat{x}_{(t+\tau|t),T_g}^{A_i} = x_{t+\tau,T_g} + \zeta_{t,i}^{\tau} + \nu_{t,i}^{\tau} 
\end{equation}
where $x_{t+\tau,T_g}$ is the target's true position vector at time $t+\tau$, and $\zeta_{t,i}^{\tau} \in \mathbb{R}^2$ represents drift in algorithm $A_i$'s $\tau$-step look-ahead prediction of target's position, and $\nu_{t,i}^{\tau} \in \mathbb{R}^2$ is zero-mean gaussian prediction noise with covariance $C_{t,i}^{\tau}$, at time $t$. Both the drift and the noise terms model the inaccuracy in the prediction by algorithm $A_i$. The drift term is defined as
\begin{equation} \label{eqt04.01}
     \zeta_{t,i}^{\tau} = c_{t,i}^{\tau} s_t
\end{equation}
where $c_{t,i}^{\tau} \in \mathbb{R}^2$ can be time-varying, and $s_t$ represents the discrete time period for which the drift sustains till time $t$ after the most recent drift reset, defined as
\begin{equation} \label{eqt05}
        s_{t+1} = \left\{
        \begin{array}{ll}
            s_t + 1  & :\quad \textit{with prob.} \quad (1-p) \\
            0 & :\quad \textit{with prob.} \quad p
        \end{array}
    \right.
\end{equation}
where $p$ is the drift reset probability. As per the above model, if $c_{t,i}^{\tau}$ remains constant in time, the drift term $(c_{t,i}^{\tau} s_t)$ grows linearly with time until it gets reset back to zero with probability $p$.

Note that the reference command signal in the translational control law, as given by equations (\ref{eqt02.2}) and (\ref{eqt02.3}), involves the $i^{th}$ robot's $\tau$-step look-ahead prediction of target's position at time $t$, $\hat{x}_{(t+\tau|t),T_g}^{i}$. Since the robot may be using some information fusion strategy, $\hat{x}_{(t+\tau|t),T_g}^{i}$ may not be equal to algorithm $A_i$'s $\tau$-step look-ahead prediction of target's position at time $t$, $\hat{x}_{(t+\tau|t),T_g}^{A_i}$. 

\subsection{Mathematical Formulation}

The problem of cooperative target trajectory tracking using heterogeneous MRS is mathematically formulated into a distributed expert-assisted learning framework. The robots are considered as `agents' in an undirected dynamic communication graph $G(t)$, and the prediction algorithms are considered as `experts' that give their expert prediction of the target behavior (Fig.\ref{fig:02}). Thus, we can call the $i^{th}$ robot as the $i^{th}$ agent, and the $i^{th}$ robot's prediction algorithm $A_i$ as the $i^{th}$ expert, $\forall i \in [N]$; the $i^{th}$ expert can be seen as assisting the $i^{th}$ agent by sharing its prediction with the agent, as shown in Fig. \ref{fig:02}. With the assistance of its expert and its neighbouring agents as per the communication network, each agent aims to predict an unknown discrete-time target sequence $\{y_t\}_{t=1}^{T}$ which is considered to be the target's trajectory $\{x_{t,T_g}\}_{t=1}^{T}$, i.e., $y_t \equiv x_{t,T_g}$, $t=1,2,\cdots,T$, where $T$ is the time horizon.

In this framework, we denote $f_{t+1,i}$ as the one-step look-ahead prediction of the target sequence as given by the $i^{th}$ expert at time $t$. Similarly, denote $f_{t+\tau,i}^p$ as the $\tau$-step look-ahead prediction of the target sequence as given by the $i^{th}$ expert at time $t$. This implies $f_{t+1,i} \equiv \hat{x}_{(t+1|t),T_g}^{A_i}$, and $f_{t+\tau,i}^p \equiv \hat{x}_{(t+\tau|t),T_g}^{A_i}$. Formally, an \lq expert' can be defined as:
\begin{definition}
Algorithms $\{A_i\}_{i=1}^{N}$ can be considered as \lq experts' if and only if their one-step look-ahead predictions $\{f_{t+1,i}\}_{i=1}^{N}$ satisfy
\begin{subequations} \label{eqt06}
\begin{align}
    ||f_{t+1,i} - f_{t+1,j}|| &\leq \delta_{t+1} \\
    \sum_{t=1}^{T} \delta_t \leq \Delta_o
\end{align}
\end{subequations} 
for all $i,j \in [N]$, some positive time varying scalar $\delta_t$, and some positive scalar constant $\Delta_o$, with $||\cdot||$ as the Euclidean norm. 
\end{definition}
Denote $\hat{f}_{t+1,i}$ as the $i^{th}$ agent's one-step look-ahead prediction of the target sequence at time $t$, i.e., $\hat{f}_{t+1,i} \equiv \hat{x}_{(t+1|t),T_g}^{i}$. Similarly, denote $\hat{f}_{t+\tau,i}^p$ as the $i^{th}$ agent's $\tau$-step look-ahead prediction of the target sequence at time $t$, i.e., $\hat{f}_{t+\tau,i}^p \equiv \hat{x}_{(t+\tau|t),T_g}^{i}$. Note that $\hat{x}_{(t+\tau|t),T_g}^{i}$ is used by the $i^{th}$ robot in its control law specified by equations (\ref{eqt02.2}) and (\ref{eqt02.3}). $\hat{f}_{t+1,i}$ and $\hat{f}_{t+\tau,i}^p$ may or may not be equal to $f_{t+1,i}$ and $f_{t+\tau,i}^p$, respectively, as the $i^{th}$ agent need not use the $i^{th}$ expert's prediction directly; instead, $\hat{f}_{t+1,i}$ and $\hat{f}_{t+1,i}^p$ can be the result of some information fusion strategy. We will describe our proposed strategy in the next subsection. Also note that ${f}_{t+1,i} = {f}_{t+1,i}^p$, but ${f}_{t+\tau,i} \neq {f}_{t+\tau,i}^p$ for $\tau = 2,3,\cdots$. Same holds true for $\hat{f}$ predictions. For ease of algorithm understanding and theoretical analysis, the above notation is used throughout in the paper.   

The agent incurs a loss $\hat{l}_{t,i} := l(\hat{f}_{t,i},y_t)$, where $l(\cdot,\cdot) \in [0,1]$ is a convex loss function. The loss due to $i^{th}$ expert's prediction at time $t$ is defined as $l_{t,i} := l(f_{t,i},y_t)$. The neighbour set is defined as: $\Omega_i(t) = \{j : j^{th}$ agent is the neighbour of $i^{th}$ agent at time $t$, as per $G(t)\}$, where $i=1,2,\cdots,N$. Further, define $\Lambda_i(t) := \Omega_i(t) \cup \{i\}$. 
\begin{figure}[h]
    \centering
    \includegraphics[width=0.45\textwidth]{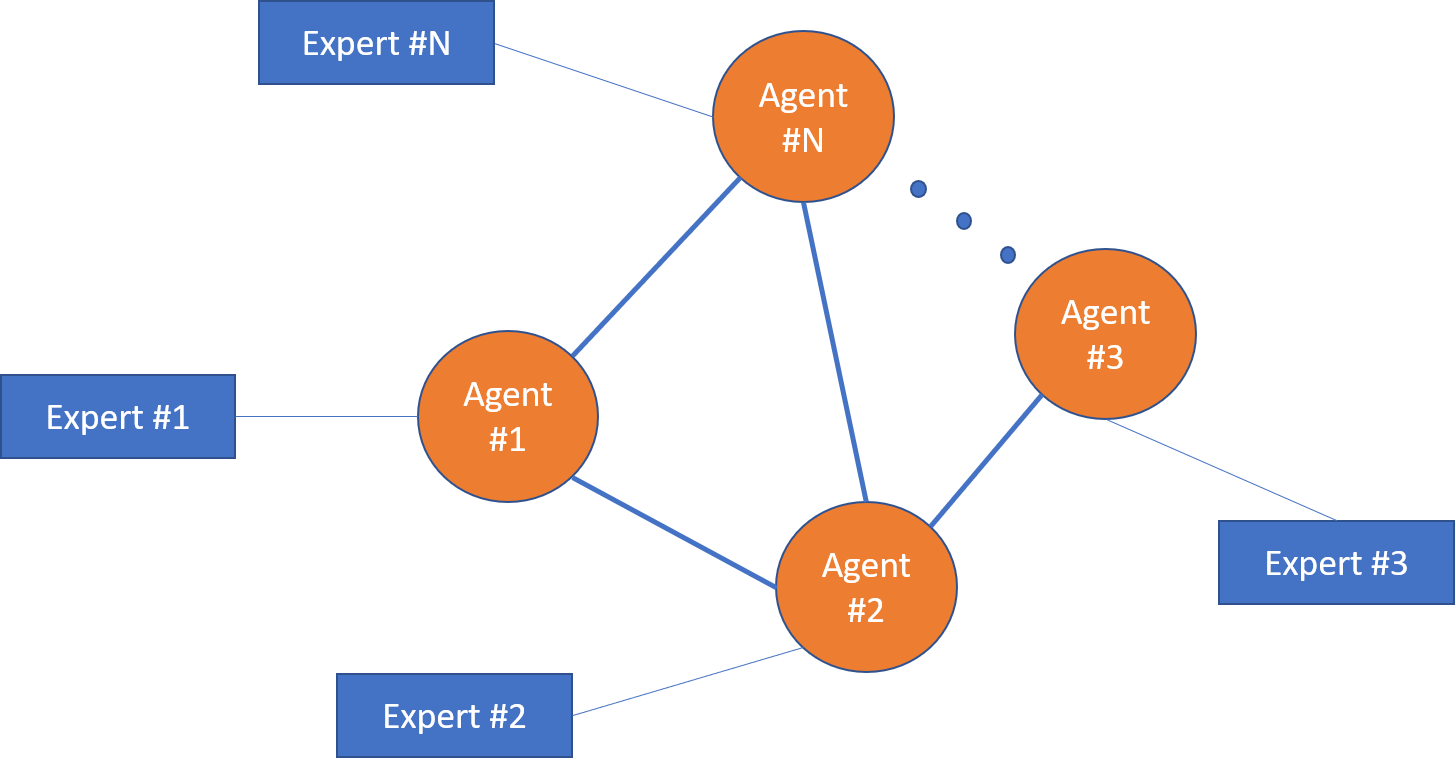}
    \caption{Decentralised Distributed Expert-Assisted Learning (D2EAL)}
    \label{fig:02}
\end{figure}
More formally, it is assumed that the unknown target sequence $y_t \in \mathcal{Y}$, where $\mathcal{Y}$ is called the outcome space, for $t=1,2,\cdots,T$. The $i^{th}$ agent's prediction $\hat{f}_{t,i} \in \mathcal{A}$ and the $i^{th}$ expert's prediction $f_{t,i} \in \mathcal{A}$, where $\mathcal{A}$ is called the action space, $t=1,2,\cdots,T$, and $i=1,2,\cdots,N$. The loss function $l : \mathcal{A} \times \mathcal{Y} \rightarrow [0,1]$, and $l(f,y)$ is convex in its first argument $f \in \mathcal{A}$. Both the outcome space $\mathcal{Y}$ and the action space $\mathcal{A}$ are a convex subset of $\mathbb{R}^2$. 
\begin{definition}
The `best expert' among all of the $N$ expert predictors, with respect to the horizon $T$, is the one which incurs the least cumulative loss in a horizon of $T$. If the ${i^*}^{th}$ expert is the best, then
\begin{equation} \label{equ1}
    i^* = arg\min_{i \in [N]} (\sum_{t=1}^{T} l_{t,i})
\end{equation}
\end{definition}
\begin{definition}
If the best expert is a `true expert', then $\sum_{t=1}^{T} l_{t,i^*} = 0$.
\end{definition}
\begin{definition}
The regret $R_i(T)$ of the $i^{th}$ agent with respect to the best expert is defined as follows:
\begin{equation} \label{equ2}
    R_i(T) = \sum_{t=1}^{T} \hat{l}_{t,i} - \sum_{t=1}^{T} l_{t,i^*}
\end{equation}
\end{definition}
Ideally, $i^{th}$ agent's goal is to keep its cumulative loss $\sum_{t=1}^{T} \hat{l}_{t,i}$ as low as possible compared to the best expert's cumulative prediction loss, or in other words, minimize the regret $R_i(T)$. But, it does not know which agent is assisted by the best expert. Thus, it needs to either estimate the best expert's prediction or possibly form a better prediction by fusing prediction information shared by its neighbours via communication, while keeping the prediction losses as low as possible.
Formally, in the online prediction and learning literature, the ideal objective of a learning agent is to have a regret that is sub-linear in $T$, i.e., $\lim_{T\rightarrow \infty} \frac{R_i(T)}{T} = 0$. 

\subsection{Decentralized Distributed Expert Assisted Learning} 
Based on the distributed learning formulation presented in the previous subsection, we propose the Decentralized Distributed Expert Assisted Learning (D2EAL) algorithm for cooperative target trajectory tracking using heterogeneous MRS, given as Algorithm \ref{alg01}. In D2EAL, $t$ denotes the current discrete-time instant. D2EAL starts by choosing positive integer values for the time horizon $T$, periodic-reset time period $T_o$, and look-ahead discrete time-step window $\tau$, and positive real values for the learning parameters $\eta_{\alpha}$ and $\eta_{w}$. The discrete-time variable $t$, $i^{th}$ agent's prediction, and the weight parameters $\hat{\alpha}_i(t)$, $\hat{\alpha}'_i(t)$, and $\hat{w}_{ij}(t)$ are initialized to $1$, after which the iterative (or loop) process begins, $\forall i \in [N]$. 

An iteration of D2EAL algorithm involves two prediction phases, a communication phase, and a learning phase. In the first prediction phase, the $i^{th}$ agent forms its \lq individual prediction' of $y_{t+1}$ at time $t$, denoted by $\bar{f}_{t+1,i}$, by a weighted convex sum of its prediction of $y_t$, i.e. $\hat{f}_{t,i}$, and its expert's prediction of $y_{t+1}$, i.e. $f_{t+1,i}$, as follows:
\begin{equation} \label{equ2.1}
    \bar{f}_{t+1,i} = \alpha_i(t) f_{t+1,i} + (1 - \alpha_i(t)) \hat{f}_{t,i}
\end{equation}
where
\begin{equation} \label{equ2.2}
    \alpha_i(t) =  \frac{\hat{\alpha}_i(t)}{\hat{\alpha}_i(t) + \hat{\alpha}'_i(t)}
\end{equation}
A similar process is carried out by the $i^{th}$ agent to obtain its \lq individual prediction' of $y_{t+\tau}$ at time $t$, denoted by $\bar{f}_{t+\tau,i}^p$, as follows:
\begin{equation} \label{equ2.3}
    \bar{f}_{t+\tau,i}^p = \alpha_i(t) f_{t+\tau,i}^p + (1 - \alpha_i(t)) \hat{f}_{t+\tau-1,i}^p
\end{equation}
In the communication phase, the $i^{th}$ agent transmits the tuple $\{i, \bar{f}_{t+1,i},\bar{f}_{t+\tau,i}^p,\hat{w}_{ii}(t)\}$ to its neighbouring agents, and in turn, receives $\{j,\bar{f}_{t+1,j},\bar{f}_{t+\tau,j}^p,\hat{w}_{jj}(t)\}$ from its neighbouring agents, $\forall j \in \Omega_i(t)$.

In the second prediction phase, the information obtained by the neighbouring agents is used to form the $i^{th}$ agent's `social prediction' of $y_{t+1}$ at time $t$, i.e. $\hat{f}_{t+1,i}$, and the $i^{th}$ agent's `social prediction' of $y_{t+\tau}$ at time $t$, i.e. $\hat{f}_{t+\tau,i}^p$. $\hat{f}_{t+1,i}$ is obtained by taking a weighted convex sum of all the individual predictions $\bar{f}_{t+1,j}$, $\forall j \in \Lambda_i(t)$, where $\Lambda_i(t) = \Omega_i(t) \cup \{i\}$, as follows:
\begin{equation} \label{equ2.31}
    \hat{f}_{t+1,i} = \sum_{\forall j \in \Lambda_i(t)} w_{ij}(t) \bar{f}_{t+1,j}
\end{equation}
where
\begin{equation} \label{equ2.312}
    w_{ij}(t) =  \frac{\hat{w}_{ij}(t)}{\sum_{\forall j' \in \Lambda_i(t)} \hat{w}_{ij'}(t)}
\end{equation}
and
\begin{equation} \label{equ2.311}
    \hat{w}_{ij}(t) = \left\{
        \begin{array}{ll}
            \hat{w}_{jj}(t) & :\quad \forall j \in \Lambda_i(t) \\
            0 & :\quad otherwise
        \end{array}
    \right.
\end{equation}
Similarly, $\hat{f}_{t+\tau,i}^p$ is obtained by taking a weighted convex sum of all the individual predictions $\bar{f}_{t+\tau,j}^p$, $\forall j \in \Lambda_i(t)$, where $\Lambda_i(t) = \Omega_i(t) \cup \{i\}$, as follows:
\begin{equation} \label{equ2.32}
    \hat{f}_{t+\tau,i}^p = \sum_{\forall j \in \Lambda_i(t)} w_{ij}(t) \bar{f}_{t+\tau,j}^p
\end{equation}
Note that individual predictions are formed by fusing local information available to an agent, whereas, social predictions are formed by fusing the individual predictions of all the neighbors of the agent in the communication network, thus utilizing the information available from the network. 

After $y_{t+1}$ is revealed/observed at time $t+1$, the first learning phase is executed in which the weights $\hat{\alpha}_i(t)$, $\hat{\alpha}'_i(t)$, $\hat{w}_{ii}$ are updated using the exponential weights scheme as follows: 
\begin{subequations} \label{equ2.4}
\begin{align}
    \hat{\alpha}_i(t+1) &= \hat{\alpha}_i(t) \exp{(-\eta_{\alpha} l_{t+1,i})} \\
  \hat{\alpha}'_i(t+1) &= \hat{\alpha}'_i(t) \exp{(-\eta_{\alpha} \hat{l}_{t+1,i}^{-})} \\
  \hat{w}_{ii}(t+1) &= \hat{w}_{ii}(t) \exp{(-\eta_{w} \bar{l}_{t+1,i})}
\end{align}
\end{subequations} 
where $l_{t+1,i} = l(f_{t+1,i},y_{t+1})$ is the loss incurred by the $i^{th}$ expert's prediction of $y_{t+1}$, $\hat{l}_{t+1,i}^{-} = l(\hat{f}_{t,i},y_{t+1})$ is the loss incurred by the $i^{th}$ agent's prediction of $y_t$ compared to $y_{t+1}$, and $\bar{l}_{t,i} = l(\bar{f}_{t+1,i},y_{t+1})$ is the loss incurred by the $i^{th}$ agent's individual prediction of $y_{t+1}$.   

A periodic reset of the weights happens after every $T_o$ discrete time steps -- weights are re-initialized to 1 to remove any potential biases that might have accumulated over the period of $T_o$ discrete steps. Further, the weights are normalized using a decentralized normalization scheme, briefly described as follows:
given the machine's least precision $\delta$, for the $i^{th}$ robot at discrete time $t$, $t=1,2,\cdots,T$, $\forall i \in [N]$:
\begin{itemize}
    \item if $\hat{w}_{ii}(t) \leq \delta$: \\
        $\hat{w}_{ii}(t) \leftarrow \hat{w}_{ii}(t)/\delta$ \\
        $\quad nrmcnt_{ii} \leftarrow nrmcnt_{ii} + 1$
    \item send $\{i, \hat{w}_{ii}(t), nrmcnt_{ii}\}$ to and receive $\{j, \hat{w}_{jj}(t), nrmcnt_{jj}\}$ from neighbours $j\in \Omega_i(t)$
    \item $\hat{w}_{ij}(t) = \left\{
        \begin{array}{ll}
            \hat{w}_{jj}(t) & :\quad \forall j \in \Lambda_i(t) \\
            0 & :\quad otherwise
        \end{array}
    \right.$
    \item $\forall j \in \Lambda_i(t)$: \\
    if $nrmcnt_{jj} > \min_{j' \in \Lambda_i(t)}nrmcnt_{j'j'}$: $\hat{w}_{ij}(t) \leftarrow 0$
\end{itemize}
A similar procedure is used for normalization of the weights $\hat{\alpha}_i(t)$ and $\hat{\alpha}'_i(t)$ as well.

Note that $f_{t+1,i}$, $\bar{f}_{t+1,i}$, and $\hat{f}_{t+1,i}$ are one-step look-ahead predictions of $y_{t}$, which are involved in the weights-update process. Whereas, $f_{t+\tau,i}^p$, $\bar{f}_{t+\tau,i}^p$, and $\hat{f}_{t+\tau,i}^p$ are $\tau$-step look-ahead predictions of $y_{t}$, which are not involved in the weights-update process. Also note that $\hat{f}_{t+\tau,i}^p \equiv \hat{x}_{(t+\tau|t),T_g}^{i}$ is used by the $i^{th}$ robot in its control law specified by equations (\ref{eqt02.2}) and (\ref{eqt02.3}). 
D2EAL is summarized in Algorithm \ref{alg01}.
\begin{algorithm} 
 \caption{: D2EAL algorithm for the $i^{th}$ agent, $\forall i \in [N]$}
 \begin{algorithmic}[1] \label{alg01}
 \renewcommand{\algorithmicrequire}{\textbf{Initialization:}}
 \renewcommand{\algorithmicensure}{\textbf{Choose:}}
 \ENSURE $T,T_o,\tau \geq 1$ (integers); $\eta_{\alpha}, \eta_w > 0$
 \REQUIRE $\hat{w}_{ii}(0) = 1$, $\hat{\alpha}_i(0) = 1$, $\hat{\alpha}'_i(0) = 1$ \\ 
  $\quad \quad \quad \quad \hat{f}_{0,i} = f_{1,i}$, $t = 0$ \\ 
  \WHILE{$t \leq T$} 
  \IF {($t > 0$)}
  \STATE Observe $y_t$
  \STATE $l_{t,i} = l(f_{t,i},y_t)$, $\hat{l}_{t,i}^{-} = l(\hat{f}_{t-1,i},y_t)$
  \STATE $\hat{\alpha}_i(t) = \hat{\alpha}_i(t-1) \exp{(-\eta_{\alpha} l_{t,i})}$
  \STATE $\hat{\alpha}'_i(t) = \hat{\alpha}'_i(t-1) \exp{(-\eta_{\alpha} \hat{l}_{t,i}^{-})}$
  \STATE $\bar{l}_{t,i} = l(\bar{f}_{t,i},y_t)$
  \STATE $\hat{w}_{ii}(t) = \hat{w}_{ii}(t-1) \exp{(-\eta_{w} \bar{l}_{t,i})}$
  \ENDIF
  \STATE \textbf{Periodic Reset}: re-initialize the weights $\hat{\alpha}_i(t)$, $\hat{\alpha}'_i(t)$, and $\hat{w}_{ii}(t)$ to $1$ after every $T_o$ discrete time steps
  \STATE $\alpha_i(t) =  \frac{\hat{\alpha}_i(t)}{\hat{\alpha}_i(t) + \hat{\alpha}'_i(t)}$
  \STATE access $i^{th}$ expert's one-step look-ahead prediction of $y_{t}$ as $f_{t+1,i}$
  \STATE $\bar{f}_{t+1,i} = \alpha_i(t) f_{t+1,i} + (1 - \alpha_i(t)) \hat{f}_{t,i}$
  \STATE access $i^{th}$ expert's $\tau$-step look-ahead prediction of $y_{t}$ as $f_{t+\tau,i}^p$
  \STATE $\bar{f}_{t+\tau,i}^p = \alpha_i(t) f_{t+\tau,i}^p + (1 - \alpha_i(t)) \hat{f}_{t+\tau-1,i}^p$
  \STATE transmit $\{i, \bar{f}_{t+1,i},\bar{f}_{t+\tau,i}^p,\hat{w}_{ii}(t)\}$ to the neighbouring agents, and receive $\{j,\bar{f}_{t+1,j},\bar{f}_{t+\tau,j}^p,\hat{w}_{jj}(t)\}$ from the neighbouring agents, $\forall j \in \Omega_i(t)$
  \STATE $\Lambda_i(t) = \Omega_i(t) \cup \{i\}$
  \STATE $\hat{w}_{ij}(t) = \left\{
        \begin{array}{ll}
            \hat{w}_{jj}(t) & :\quad \forall j \in \Lambda_i(t) \\
            0 & :\quad otherwise
        \end{array}
    \right.$
  \STATE $w_{ij}(t) =  \frac{\hat{w}_{ij}(t)}{\sum_{\forall j' \in \Lambda_i(t)} \hat{w}_{ij'}(t)}$
  \STATE $\hat{f}_{t+1,i} = \sum_{\forall j \in \Lambda_i(t)} w_{ij}(t) \bar{f}_{t+1,j}$
  \STATE Assign $\hat{f}_{t+1,i}$ as the $i^{th}$ agent's one-step look-ahead prediction of the target sequence $y_t$
  \STATE $\hat{f}_{t+\tau,i}^p = \sum_{\forall j \in \Lambda_i(t)} w_{ij}(t) \bar{f}_{t+\tau,j}^p$
  \STATE Assign $\hat{f}_{t+\tau,i}^p$ as the $i^{th}$ agent's $\tau$-step look-ahead prediction of the target sequence $y_t$ 
  \STATE $t = t + 1$
  \ENDWHILE
 \end{algorithmic} 
\end{algorithm}

\section{Theoretical Analysis of D2EAL}
In this section, we present theoretical analysis of the D2EAL algorithm (without periodic reset) in terms of the $i^{th}$ agent's (or robot's) regret performance. In the following subsections, we define various regret measures and derive their worst-case bounds, which are then minimized with respect to the learning parameters $\eta_{\alpha}$ and $\eta_w$ to give the optimal worst-case regret bounds.  

The $i^{th}$ agent incurs a loss $\hat{l}_{t,i} := l(\hat{f}_{t,i},y_t)$, where $l(\cdot,\cdot) \in [0,1]$ is a convex loss function. The loss due to $i^{th}$ expert's prediction at time $t$ is defined as $l_{t,i} := l(f_{t,i},y_t)$. The neighbour set is defined as: $\Omega_i(t) = \{j : j^{th}$ agent is the neighbour of $i^{th}$ agent at time $t$, as per $G(t)\}$, where $i=1,2,\cdots,N$. Further, define $\Lambda_i(t) := \Omega_i(t) \cup \{i\}$.

It is assumed that the unknown target sequence $y_t \in \mathcal{Y}$, where $\mathcal{Y}$ is called the outcome space, for $t=1,2,\cdots,T$. The $i^{th}$ agent's prediction $\hat{f}_{t,i} \in \mathcal{A}$ and the $i^{th}$ expert's prediction $f_{t,i} \in \mathcal{A}$, where $\mathcal{A}$ is called the action space, $t=1,2,\cdots,T$, and $i=1,2,\cdots,N$. The loss function $l : \mathcal{A} \times \mathcal{Y} \rightarrow [0,1]$, and $l(p,y)$ is convex in its first argument $p \in \mathcal{A}$. Both the outcome $\mathcal{Y}$ and the action $\mathcal{A}$ are a convex subset of $\mathbb{R}^n$.

At time $t$, $\hat{l}_{t,i}^{-} = l(\hat{f}_{t-1,i},y_t)$ is the loss incurred by $i^{th}$ agent's previous time-step prediction $\hat{f}_{t-1,i}$, with respect to $y_{t}$. Define $\hat{L}_{t,i}^{-} := \sum_{s=1}^{t} \hat{l}_{t,i}^{-}$. 

$\bar{f}_{t+1,j}$ denotes the $i^{th}$ agent's `individual prediction' of $y_{t+1}$ at time $t$. Loss incurred by the $i^{th}$ agent's `individual prediction' with respect to $y_t$ is given as $\bar{l}_{t,i} = l(\bar{f}_{t,i},y_t)$. Define $\bar{L}_{t,i} := \sum_{s=1}^{t} \bar{l}_{t,i}$. 

$\hat{f}_{t+1,i}$ is called as $i^{th}$ agent's `social prediction' of $y_{t+1}$ at time $t$. Loss incurred by the $i^{th}$ agent's `social prediction' with respect to $y_t$ is given as $\hat{l}_{t,i} = l(\hat{f}_{t,i},y_t)$. Define $\hat{L}_{t,i} := \sum_{s=1}^{t} \hat{l}_{t,i}$.

\subsection{Agent's Individual Prediction Regret Analysis}
This subsection presents theoretical results on $i^{th}$ agent's individual prediction regret. $i^{th}$ agent's individual prediction regret refers to the regret for its individual prediction $\bar{f}_{t,i}$ with respect to the information available to the agent by its own expert's prediction $f_{t,i}$ and its previous time-step prediction $\hat{f}_{t-1,i}$.     
\begin{lemma} \label{lem01}
Using D2EAL algorithm for predicting an unknown signal $y_t$ with dynamics of any arbitrary structure, with the time horizon $T \geq 1$, and the learning parameters $\eta_{\alpha} > 0$ and $\eta_w > 0$, the D2EAL algorithm satisfies the following:
\begin{equation} \label{equat07}
    R_i^I(T) := \bar{L}_{T,i} - \min\left\{ L_{T,i}, \hat{L}_{T,i}^{-} \right\} \leq \frac{\eta_{\alpha} T}{8} + \frac{\log2}{\eta_{\alpha}}
\end{equation}
where $R_i^I(T)$ is defined as the $i^{th}$ agent's individual prediction's regret over a horizon of $T$, $\forall i \in [N]$.
\end{lemma}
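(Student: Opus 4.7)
The individual prediction step is an instance of the classical exponentially weighted average forecaster applied to exactly two "meta-experts": the $i^{th}$ expert itself (with loss $l_{t,i}$) and the agent's own previous social prediction (with loss $\hat l_{t,i}^{-}$). The weights are $\hat\alpha_i(t)$ and $\hat\alpha'_i(t)$, both updated by the standard exponential rule with parameter $\eta_\alpha$. The plan is therefore to mimic the textbook potential-function argument (as in Cesa-Bianchi and Lugosi) for two experts, where $\log N = \log 2$ produces the second term in the claimed bound.

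\textbf{Key steps in order.} First I would define the potential $W_t := \hat\alpha_i(t) + \hat\alpha'_i(t)$, so that $W_0 = 2$ and $\alpha_i(t-1) = \hat\alpha_i(t-1)/W_{t-1}$. A direct computation using the update (\ref{equ2.4}a)--(\ref{equ2.4}b) gives
\begin{equation*}
\frac{W_t}{W_{t-1}} = \alpha_i(t-1)\,e^{-\eta_\alpha l_{t,i}} + \bigl(1-\alpha_i(t-1)\bigr)\,e^{-\eta_\alpha \hat l_{t,i}^{-}}.
\end{equation*}
Second, since $l_{t,i},\hat l_{t,i}^{-}\in[0,1]$, I apply Hoeffding's lemma to the Bernoulli-like random variable that takes value $-\eta_\alpha l_{t,i}$ with probability $\alpha_i(t-1)$ and $-\eta_\alpha \hat l_{t,i}^{-}$ with probability $1-\alpha_i(t-1)$, obtaining
\begin{equation*}
\log\frac{W_t}{W_{t-1}} \leq -\eta_\alpha\!\left[\alpha_i(t-1)\,l_{t,i} + \bigl(1-\alpha_i(t-1)\bigr)\hat l_{t,i}^{-}\right] + \frac{\eta_\alpha^{2}}{8}.
\end{equation*}
Third, I use convexity of $l(\cdot,y_t)$ in its first argument together with the individual prediction rule (\ref{equ2.1}), which gives $\bar l_{t,i} \leq \alpha_i(t-1)\,l_{t,i} + (1-\alpha_i(t-1))\hat l_{t,i}^{-}$. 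Combining with the previous display and summing over $t=1,\ldots,T$ yields
\begin{equation*}
\log\frac{W_T}{W_0} \leq -\eta_\alpha \bar L_{T,i} + \frac{\eta_\alpha^{2} T}{8}.
\end{equation*}

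\textbf{Lower bound on the potential and conclusion.} Since every summand in $W_T$ is nonnegative, $W_T \geq \max\{\hat\alpha_i(T),\hat\alpha'_i(T)\} = \exp\bigl(-\eta_\alpha\min\{L_{T,i},\hat L_{T,i}^{-}\}\bigr)$. Taking logs and using $W_0 = 2$,
\begin{equation*}
-\eta_\alpha \min\{L_{T,i},\hat L_{T,i}^{-}\} - \log 2 \leq \log\frac{W_T}{W_0}.
\end{equation*}
Chaining the two bounds and dividing by $\eta_\alpha > 0$ gives exactly (\ref{equat07}).

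\textbf{Main obstacle.} There is no real obstacle: the argument is a textbook Hedge-style analysis with $N=2$ experts. The only subtlety to flag explicitly is that the second "expert" is not a fixed predictor but the agent's own past prediction $\hat f_{t-1,i}$, which is nonetheless a legitimate element of the action space with loss in $[0,1]$, so the Hoeffding step and the convexity step go through without modification. I would also note that the periodic reset is excluded from the analysis as stated at the start of Section~III; within a single reset window $W_0 = 2$ holds, which is exactly what the bound uses.
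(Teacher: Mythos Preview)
Your proposal is correct and follows essentially the same route as the paper's proof: the paper defines the potential $\phi_i(t)=\tfrac{1}{\eta_\alpha}\log(\hat\alpha_i(t)+\hat\alpha'_i(t))$ (your $\tfrac{1}{\eta_\alpha}\log W_t$), lower-bounds $\phi_i(T)-\phi_i(0)$ exactly as you do, and upper-bounds the per-step increment via Hoeffding's lemma followed by Jensen's inequality (your convexity step), yielding the same two inequalities that combine to~(\ref{equat07}). The only cosmetic difference is that the paper works with $\phi_i$ directly rather than $W_t$, and names the convexity step ``Jensen's inequality.''
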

\begin{proof}
Consider the potential function: $\phi_i(t) = \frac{1}{\eta_{\alpha}} \log(\hat{\alpha}_i(t) + \hat{\alpha}'_i(t))$. Since $\hat{\alpha}_i(0) = \hat{\alpha}'_i(0) = 1$, we have $\hat{\alpha}_i(t) = \exp{\left(-\eta_{\alpha} \sum_{s=1}^{t} l_{s,i} \right)} = \exp{\left(-\eta_{\alpha} L_{t,i}\right)}$, and $\hat{\alpha}'_i(t) = \exp{\left(-\eta_{\alpha} \sum_{s=1}^{t} \hat{l}_{s,i}^{-} \right)} = \exp{\left(-\eta_{\alpha} \hat{L}_{t,i}^{-}\right)}$. This implies
\begin{equation} \label{equat01}
    \phi_i(t) = \frac{1}{\eta_{\alpha}} \log \left( \exp{\left(-\eta_{\alpha} L_{t,i}\right)} + \exp{\left(-\eta_{\alpha} \hat{L}_{t,i}^{-}\right)} \right)
\end{equation}
and
\begin{equation} \label{equat02}
    \phi_i(0) = \frac{1}{\eta_{\alpha}} \log2
\end{equation}
Therefore
\begin{equation} \label{equat03}
    \phi_i(t) - \phi_i(0) = \frac{1}{\eta_{\alpha}} \log \left( \frac{\exp{\left(-\eta_{\alpha} L_{t,i}\right)} + \exp{\left(-\eta_{\alpha} \hat{L}_{t,i}^{-}\right)}}{2} \right)
\end{equation}
This further implies
\begin{equation} \label{equat04a}
    \phi_i(T) - \phi_i(0) \geq -\min\left\{L_{T,i},\hat{L}_{T,i}^{-}\right\} - \frac{\log2}{\eta_{\alpha}} 
\end{equation}
Now, consider the per-step decrease in the potential function $\phi_i(t)$ as follows:
\begin{equation} \label{equat05}
    \phi_i(t) - \phi_i(t-1) = \frac{1}{\eta_{\alpha}} \log \left( \frac{\hat{\alpha}_i(t) + \hat{\alpha}'_i(t)}{\hat{\alpha}_i(t-1) + \hat{\alpha}'_i(t-1)} \right)
\end{equation}
Further simplification leads to
\begin{equation}
    \begin{array}{cc}
         \phi_i(t) - \phi_i(t-1) =& \frac{1}{\eta_{\alpha}} \log ( {\alpha}_i(t-1) \exp{\left( -\eta_{\alpha} l_{t,i} \right)}
   \\&+ ({\alpha'}_i(t-1)) \exp{\left( -\eta_{\alpha} \hat{l}_{t,i}^{-} \right)} )
    \end{array}
\end{equation}
First, using Hoeffding's Lemma (ch.2,\cite{massart2007concentration}) and then, using Jensen's Inequality (\cite{jensen1906convex}), we get the following:
\begin{equation} \label{equat06}
    \phi_i(t) - \phi_i(t-1) \leq -\bar{l}_{t,i} + \frac{\eta_{\alpha}}{8}
\end{equation}
Unrolling the above equation for $t=1,2,...,T$ to get $T$ equations, and adding up all those equations leads to the following:
\begin{equation} \label{equat07.1}
    \phi_i(T) - \phi_i(0) \leq -\bar{L}_{T,i} + \frac{\eta_{\alpha} T}{8}
\end{equation}
Equations (\ref{equat04a}) and (\ref{equat07.1}) lead to the desired result given as equation (\ref{equat07}).  
\end{proof} 
\begin{corollary}
Minimizing the worst-case regret bound with respect to $\eta_{\alpha}$ given in equation (\ref{equat07}) leads to the optimal learning parameter choice $\eta_{\alpha} = \sqrt{\frac{8\log2}{T}}$. This gives the following sub-linear worst-case regret bound:
\begin{equation} \label{equat08}
    R_i^I(T) \leq \sqrt{\frac{T}{2} \log2}
\end{equation}
\end{corollary}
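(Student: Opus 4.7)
The plan is to treat the right-hand side of equation (\ref{equat07}) as a function of $\eta_{\alpha} > 0$ alone (since $T$ is fixed) and minimize pointwise. Define $g(\eta_{\alpha}) := \frac{\eta_{\alpha} T}{8} + \frac{\log 2}{\eta_{\alpha}}$. First I would observe that $g$ is strictly convex on $(0, \infty)$ because $g''(\eta_{\alpha}) = 2 \log 2 / \eta_{\alpha}^3 > 0$, so any stationary point of $g$ is the unique global minimizer. This justifies optimizing by a single first-order condition rather than worrying about boundary behavior (and indeed $g \to \infty$ at both $\eta_{\alpha} \to 0^+$ and $\eta_{\alpha} \to \infty$, so an interior minimizer must exist).

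Next I would compute $g'(\eta_{\alpha}) = \frac{T}{8} - \frac{\log 2}{\eta_{\alpha}^2}$ and solve $g'(\eta_{\alpha}) = 0$, which rearranges to $\eta_{\alpha}^2 = \frac{8 \log 2}{T}$ and hence $\eta_{\alpha}^{*} = \sqrt{8 \log 2 / T}$, matching the claim. Substituting $\eta_{\alpha}^{*}$ back into $g$ gives
\begin{equation*}
g(\eta_{\alpha}^{*}) = \frac{T}{8}\sqrt{\frac{8 \log 2}{T}} + \log 2 \cdot \sqrt{\frac{T}{8 \log 2}} = \sqrt{\frac{T \log 2}{8}} + \sqrt{\frac{T \log 2}{8}} = 2\sqrt{\frac{T \log 2}{8}} = \sqrt{\frac{T \log 2}{2}},
\end{equation*}
which is precisely the stated bound. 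Combining this with Lemma \ref{lem01} yields $R_i^I(T) \leq \sqrt{(T/2)\log 2}$, and since this is $O(\sqrt{T})$ it is sub-linear in $T$, as desired.

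There is no real obstacle here; the argument is a one-variable calculus calculation applied to the bound already established in Lemma \ref{lem01}. The only point worth flagging is the symmetry that the two terms of $g$ are equalized at the optimum (a generic AM-GM-style phenomenon for bounds of the form $a \eta + b/\eta$), which is why the final constant comes out cleanly as $\sqrt{(T/2)\log 2}$ rather than an asymmetric expression.
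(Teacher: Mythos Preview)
Your argument is correct and is exactly the standard one-variable optimization the paper implicitly relies on; the paper states this corollary without proof, so your calculus derivation (first-order condition on $g(\eta_{\alpha}) = \eta_{\alpha} T/8 + (\log 2)/\eta_{\alpha}$, then substitution) is precisely what is intended.
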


\subsection{Agent's Social Prediction Regret Analysis}
This subsection presents theoretical results on $i^{th}$ agent's social prediction regret. $i^{th}$ agent's social prediction regret refers to the regret for its social prediction $\hat{f}_{t,i}$ with respect to the information available to the agent by its neighbouring agent's individual predictions $\bar{f}_{t,j}$ ($\forall j \in \Omega_i(t)$) and its own individual prediction $\bar{f}_{t,i}$.

\textit{Assumption 1:} $\Omega_i(t) \subseteq \Omega_i(t-1)$, i.e., the set of neighbouring agents of the $i^{th}$ agent at time $t$ is either a subset of or equal to the set of neighbouring agents of the $i^{th}$ agent at time $t-1$.

Note that Assumption 1 implies: $\Lambda_i(t) \subseteq \Lambda_i(t-1)$, and $d_i(t) \leq d_i(t-1)$, where $d_i(t) = \sum_{\forall j \in \Lambda_i(t)} (1) = \sum_{j=1}^{N} \mathbf{1}(j \in \Lambda_i(t))$ is the degree of the $i^{th}$ agent (node) at time $t$, and $\mathbf{1}(.)$ is the indicator function. 
\begin{lemma} \label{lem02}
Using D2EAL under Assumption 1, for an unknown signal $y_t$ with dynamics of any arbitrary structure, with the time horizon $T \geq 1$, and the learning parameters $\eta_{\alpha} > 0$ and $\eta_w > 0$, the D2EAL algorithm satisfies the following:
\begin{equation} \label{equat17}
    R_i^S(T) := \hat{L}_{T,i} -\min_{j \in \Lambda_i(T)} \left\{ \bar{L}_{T,j} \right\} \leq \frac{\eta_w T}{8} + \frac{\log d_i(0)}{\eta_w}
\end{equation}
where $R_i^S(T)$ is defined as the $i^{th}$ agent's social prediction's regret over a horizon of $T$, $\forall i \in [N]$.
\end{lemma}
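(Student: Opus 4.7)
The plan is to mirror the exponential-weights potential argument used for Lemma \ref{lem01}, but with a potential built from the social weights $\hat{w}_{ij}(t)$ instead of $\hat{\alpha}_i(t),\hat{\alpha}'_i(t)$. Concretely, I would introduce
\[
\psi_i(t) \;=\; \frac{1}{\eta_w}\, \log\!\left( \sum_{j \in \Lambda_i(t)} \hat{w}_{ij}(t) \right).
\]
Since $\hat{w}_{ij}(t) = \hat{w}_{jj}(t)$ for $j \in \Lambda_i(t)$ with $\hat{w}_{jj}(0) = 1$, unrolling the update rule $\hat{w}_{jj}(t) = \hat{w}_{jj}(t-1)\exp(-\eta_w \bar{l}_{t,j})$ gives $\hat{w}_{jj}(t) = \exp(-\eta_w \bar{L}_{t,j})$. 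Evaluating at the endpoints yields $\psi_i(0) = \tfrac{1}{\eta_w}\log d_i(0)$ and, by keeping only the dominant term inside the log, $\psi_i(T) \ge -\min_{j \in \Lambda_i(T)} \bar{L}_{T,j}$, so
\[
\psi_i(T) - \psi_i(0) \;\ge\; -\min_{j \in \Lambda_i(T)} \bar{L}_{T,j} \;-\; \frac{\log d_i(0)}{\eta_w}.
\]

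Next, I would bound the per-step increment $\psi_i(t) - \psi_i(t-1)$ from above. This is where Assumption 1 enters: since $\Lambda_i(t) \subseteq \Lambda_i(t-1)$, I can enlarge the numerator's index set to $\Lambda_i(t-1)$ without decreasing the sum. Factoring $\hat{w}_{jj}(t) = \hat{w}_{jj}(t-1)\exp(-\eta_w \bar{l}_{t,j})$ then turns the resulting ratio into a convex combination under the distribution $w_{ij}(t-1)$ on $\Lambda_i(t-1)$, and Hoeffding's Lemma (applicable because $\bar{l}_{t,j} \in [0,1]$) gives
\[
\psi_i(t) - \psi_i(t-1) \;\le\; -\sum_{j \in \Lambda_i(t-1)} w_{ij}(t-1)\, \bar{l}_{t,j} \;+\; \frac{\eta_w}{8}.
\]
Recalling that $\hat{f}_{t,i} = \sum_{j \in \Lambda_i(t-1)} w_{ij}(t-1)\, \bar{f}_{t,j}$ (formed one iteration earlier), I invoke convexity of $l(\cdot, y_t)$ in its first argument via Jensen's inequality to obtain $\sum_j w_{ij}(t-1)\, \bar{l}_{t,j} \ge \hat{l}_{t,i}$. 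Telescoping from $t=1$ to $T$ yields $\psi_i(T) - \psi_i(0) \le -\hat{L}_{T,i} + \eta_w T / 8$, and chaining with the earlier lower bound produces the claimed inequality.

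The main obstacle is handling the time-varying neighbor set cleanly, because the normalizers in $w_{ij}(t)$ and $w_{ij}(t-1)$ are over different index sets, so $\psi_i(t) - \psi_i(t-1)$ does not by itself reduce to an expectation under $w_{ij}(t-1)$. Assumption 1 is precisely what is needed to push the numerator's index set out to $\Lambda_i(t-1)$; without monotonicity of $\Lambda_i$, newly arriving neighbors could inflate $\sum_{j \in \Lambda_i(t)} \hat{w}_{ij}(t)$ and break the telescoping. A secondary care point is that the additive constant in the bound comes from $\psi_i(0)$ rather than some intermediate time; Assumption 1 also guarantees $d_i(t) \le d_i(0)$, so $\log d_i(0)$ is the right uniform constant and no separate argument is needed to justify it over a larger intermediate degree.
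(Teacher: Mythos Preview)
Your proposal is correct and follows essentially the same approach as the paper: the same potential function $\psi_i(t)=\frac{1}{\eta_w}\log\sum_{j\in\Lambda_i(t)}\hat{w}_{ij}(t)$, the same lower bound via the dominant term, the same use of Assumption~1 to enlarge the numerator's index set to $\Lambda_i(t-1)$, and the same Hoeffding-then-Jensen step before telescoping. Your discussion of why Assumption~1 is needed is also precisely the point the paper relies on.
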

\begin{proof}
Consider the potential function: $\Phi_i(t) = \frac{1}{\eta_w} \log\left( \sum_{\forall j \in \Lambda_i(t)} \hat{w}_{ij}(t) \right)$. Since $\hat{w}_{ij}(0) = 1$ for $j \in \Lambda_i(0)$, and $\hat{w}_{ij}(0) = 0$ otherwise, we have $\hat{w}_{ij}(t) = \exp{\left( -\eta_w \sum_{s=1}^{t} \bar{l}_{s,j} \right)} = \exp{\left( -\eta_w \bar{L}_{t,j} \right)}$ for $j \in \Lambda_i(t)$, and $\hat{w}_{ij}(t) = 0$ otherwise. Note that $\hat{w}_{ij}(t) = \hat{w}_{jj}(t) = \exp{\left( -\eta_w \sum_{s=1}^{t} \bar{l}_{s,j} \right)} = \exp{\left( -\eta_w \bar{L}_{t,j} \right)}$ if $j \in \Lambda_i(t)$, $\forall i \in [N]$. This implies
\begin{equation} \label{equat09}
    \Phi_i(t) = \frac{1}{\eta_w} \log\left( \sum_{\forall j \in \Lambda_i(t)} \exp{\left( -\eta_w \bar{L}_{t,j} \right)}  \right)
\end{equation}
and
\begin{equation} \label{equat10}
    \Phi_i(0) = \frac{1}{\eta_w} \log\left( d_i(0) \right)
\end{equation}
Thus, we have
\begin{equation} \label{equat11}
    \Phi_i(T) - \Phi_i(0) = \frac{1}{\eta_w} \log\left( \frac{\sum_{\forall j \in \Lambda_i(T)} \exp{\left( -\eta_w \bar{L}_{T,j} \right)}}{d_i(0)} \right)
\end{equation}
This further implies
\begin{equation} \label{equat12}
    \Phi_i(T) - \Phi_i(0) \geq -\min_{j \in \Lambda_i(T)} \left\{ \bar{L}_{T,j} \right\} - \frac{\log d_i(0)}{\eta_w}
\end{equation}
Per-step decrease in the potential function $\Phi_i(t)$ can be given as follows:
\begin{equation} \label{equat13}
    \Phi_i(t) - \Phi_i(t-1) = \frac{1}{\eta_w} \log\left( \frac{\sum_{\forall j \in \Lambda_i(t)} \hat{w}_{ij}(t)}{\sum_{\forall j \in \Lambda_i(t-1)} \hat{w}_{ij}(t-1)} \right)
\end{equation}
or
\begin{equation} \label{equat13.1}
    \Phi_i(t) - \Phi_i(t-1) = \frac{ \log\left( \frac{\sum_{\forall j \in \Lambda_i(t)} \exp{\left( -\eta_w \bar{L}_{t,j} \right)}}{\sum_{\forall j \in \Lambda_i(t-1)} \exp{\left( -\eta_w \bar{L}_{t-1,j} \right)}} \right)}{\eta_w}
\end{equation}

Assumption 1 implies
\begin{equation} \label{equat13.2}
    \Phi_i(t) - \Phi_i(t-1) \leq \frac{ \log\left( \frac{\sum_{\forall j \in \Lambda_i(t-1)} \exp{\left( -\eta_w \bar{L}_{t,j} \right)}}{\sum_{\forall j \in \Lambda_i(t-1)} \exp{\left( -\eta_w \bar{L}_{t-1,j} \right)}} \right)}{\eta_w}
\end{equation}
Further simplification leads to 
\begin{equation} \label{equat14}
\begin{array}{cc}
     \Phi_i(t) - \Phi_i(t-1) \leq \\\frac{1}{\eta_w} \log\left( {\sum_{\forall j \in \Lambda_i(t-1)} {w}_{ij}(t-1) \exp{\left( -\eta_w \bar{l}_{t,j} \right)}} \right)
\end{array}
\end{equation}
First, using Hoeffding's Lemma (ch.2,\cite{massart2007concentration}) and then, using Jensen's Inequality (\cite{jensen1906convex}), we get the following:
\begin{equation} \label{equat15}
    \Phi_i(t) - \Phi_i(t-1) \leq -\hat{l}_{t,i} + \frac{\eta_w}{8}
\end{equation}
Unrolling the above equation for $t=1,2,...,T$ to get $T$ equations, and adding up all those equations leads to the following:
\begin{equation} \label{equat16}
    \Phi_i(T) - \Phi_i(0) \leq -\hat{L}_{T,i} + \frac{\eta_w T}{8}
\end{equation}
Equations (\ref{equat12}) and (\ref{equat16}) lead to the desired result given as equation (\ref{equat17}).
\end{proof}
\begin{corollary}
Minimizing the worst-case regret bound with respect to $\eta_w$ given in equation (\ref{equat17}) leads to the optimal learning parameter choice $\eta_w = \sqrt{\frac{8\log d_i(0)}{T}}$. This gives the following sub-linear worst-case regret bound:
\begin{equation} \label{equat18}
    R_i^S(T) \leq \sqrt{\frac{T}{2} \log d_i(0)}
\end{equation}
\end{corollary}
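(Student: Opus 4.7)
The plan is to treat the right-hand side of the bound (\ref{equat17}) from Lemma \ref{lem02} as a real-valued function of the single positive scalar $\eta_w$ and minimise it pointwise. Since $T$ and the initial degree $d_i(0)$ are constants once the horizon and the starting communication topology are fixed, the task reduces to an elementary univariate optimisation, namely minimising
\begin{equation*}
    g(\eta_w) := \frac{\eta_w T}{8} + \frac{\log d_i(0)}{\eta_w}, \qquad \eta_w > 0.
\end{equation*}

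First I would note that $g$ is the sum of a strictly increasing linear term and a strictly decreasing positive reciprocal term, hence strictly convex on $(0,\infty)$; any stationary point is therefore the unique global minimiser. Differentiating and setting $g'(\eta_w)=0$ gives $T/8 = \log d_i(0)/\eta_w^2$, which, taking the positive root consistent with $\eta_w>0$, yields the claimed optimiser $\eta_w^\star = \sqrt{8\log d_i(0)/T}$. Equivalently, I could invoke AM-GM directly: for $a,b>0$ one has $a+b \ge 2\sqrt{ab}$ with equality iff $a=b$, so applying this to $a=\eta_w T/8$ and $b=\log d_i(0)/\eta_w$ pins down both the minimum value and the minimiser in one step.

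Substituting $\eta_w^\star$ back into $g$ makes the two summands equal by construction, so the minimum value is $g(\eta_w^\star) = 2\sqrt{(T/8)\log d_i(0)} = \sqrt{(T/2)\log d_i(0)}$, which matches the bound in (\ref{equat18}). Combining this with Lemma \ref{lem02} gives the stated worst-case regret bound for the choice $\eta_w=\eta_w^\star$. Finally, since $\sqrt{(T/2)\log d_i(0)}$ grows as $O(\sqrt{T})$, dividing by $T$ yields $R_i^S(T)/T \to 0$ as $T\to\infty$, confirming that the bound is sub-linear as claimed. The main obstacle is essentially trivial here: this is routine convex-calculus balancing of two terms, and the only things to verify are that $\eta_w^\star>0$ and that it is genuinely a minimum, both of which are immediate from the strict convexity of $g$.
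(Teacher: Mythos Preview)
Your proposal is correct and is exactly the standard balancing argument the paper has in mind; the corollary in the paper is stated without an explicit proof, and your univariate minimisation of $g(\eta_w)=\eta_w T/8+\log d_i(0)/\eta_w$ (via calculus or AM--GM) is the intended derivation. The only small caveat worth noting is the degenerate case $d_i(0)=1$, where $\log d_i(0)=0$ and $g$ is no longer strictly convex; there the formula $\eta_w^\star=0$ lies on the boundary, but the bound (\ref{equat18}) still holds trivially since $\Lambda_i$ is a singleton and $R_i^S(T)\le 0$.
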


\subsection{Global Regret Analysis}
This subsection presents theoretical results on $i^{th}$ agent's individual prediction global regret and social prediction global regret. $i^{th}$ agent's individual prediction global regret refers to the regret for its individual prediction $\bar{f}_{t,i}$ with respect to the best expert's prediction $f_{t,i^*}$ in the network, where $i^* = \arg \min_{j\in [N]} l_{T,j}$. $i^{th}$ agent's social prediction global regret refers to the regret for its social prediction $\hat{f}_{t,i}$ with respect to the best individual prediction present in the network $\bar{f}_{t,j^*}$, where $j^* = \arg \min_{j\in[N]} \bar{L}_{T,j}$. 

\textit{Assumption 2:} $|l(x_1,y) - l(x_2,y)| \leq L_1 ||x_1 - x_2||$ and $|l(x,y_1) - l(x,y_2)| \leq L_2 ||y_1 - y_2||$ , where $L_1$ and $L_2$ are Lipschitz constants, and $||\cdot||$ is the Euclidean norm. 

\textit{Assumption 3:} $||f_{t,i} - f_{t,j}|| \leq \delta_{ij}(t) \leq \delta_t$ , $\forall i, j \in [N]$, $t = 1,2,\cdots,T$, where $||\cdot||$ is the Euclidean norm.

\subsubsection{Agent's Individual Prediction Global Regret Analysis}
\begin{theorem} \label{thm01}
Using D2EAL under Assumptions 2 and 3, for an unknown signal $y_t$ with dynamics of any arbitrary structure, with the time horizon $T \geq 1$, and the learning parameters $\eta_{\alpha} > 0$ and $\eta_w > 0$, $\exists \Delta_o \geq \sum_{t=1}^{T} \delta_t$ such that the D2EAL algorithm satisfies the following:
\begin{equation} \label{equat49.1}
     R_i^{GI}(T) := \bar{L}_{T,i} - L_{T,i^*} \leq \frac{\eta_{\alpha} T}{8} + \frac{\log2}{\eta_{\alpha}} + L_1 \Delta_o
\end{equation}
where $R_i^{GI}(T)$ is the $i^{th}$ agent's individual prediction global regret, and $i^* = \arg\min_{j \in [N]} L_{T,j}$, $\forall i \in [N]$.
\end{theorem}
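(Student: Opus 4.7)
The plan is to combine Lemma \ref{lem01} with the Lipschitz property (Assumption 2) and the expert-closeness property (Assumption 3 together with the $\Delta_o$ summability built into Definition 1) to pass from a local comparator (the agent's own expert $L_{T,i}$) to the global comparator (the best expert $L_{T,i^*}$). Structurally, Lemma \ref{lem01} already controls $\bar{L}_{T,i}$ against $\min\{L_{T,i},\hat{L}_{T,i}^{-}\}$, so what remains is to pay only a small, horizon-independent Lipschitz price to swap $L_{T,i}$ for $L_{T,i^*}$.

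First, I would invoke Lemma \ref{lem01} and immediately weaken the minimum: since $\min\{L_{T,i},\hat{L}_{T,i}^{-}\}\leq L_{T,i}$, we get
\begin{equation}
\bar{L}_{T,i} \;\leq\; L_{T,i} \;+\; \frac{\eta_\alpha T}{8} \;+\; \frac{\log 2}{\eta_\alpha}.
\end{equation}
This isolates the quantity $L_{T,i}-L_{T,i^*}$ as the only remaining gap to the desired bound.

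Next, I would bound this gap termwise. Using Assumption 2 (Lipschitz in the first argument with constant $L_1$) and Assumption 3, each per-step difference satisfies
\begin{equation}
l_{t,i}-l_{t,i^*} \;=\; l(f_{t,i},y_t)-l(f_{t,i^*},y_t) \;\leq\; L_1\,\|f_{t,i}-f_{t,i^*}\| \;\leq\; L_1\,\delta_t.
\end{equation}
Summing over $t=1,\ldots,T$ and using the summability condition from Definition 1 (i.e., $\sum_{t=1}^{T}\delta_t\leq \Delta_o$, which is precisely the $\Delta_o$ asserted to exist in the theorem statement) yields
\begin{equation}
L_{T,i}-L_{T,i^*} \;\leq\; L_1 \sum_{t=1}^{T}\delta_t \;\leq\; L_1 \Delta_o.
\end{equation}

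Finally, adding this Lipschitz gap to the bound from Lemma \ref{lem01} produces
\begin{equation}
\bar{L}_{T,i}-L_{T,i^*} \;\leq\; \frac{\eta_\alpha T}{8} \;+\; \frac{\log 2}{\eta_\alpha} \;+\; L_1 \Delta_o,
\end{equation}
which is exactly \eqref{equat49.1}. There is no real obstacle here: the heavy lifting (Hoeffding's lemma, Jensen's inequality, the potential-function telescoping) was already performed inside Lemma \ref{lem01}. The only subtle point worth flagging is the need to apply the Lipschitz inequality \emph{to the expert's own predictions} $f_{t,i},f_{t,i^*}$ (not the agents' fused predictions), so that Assumption 3 / Definition 1 can be invoked directly to produce the $L_1\Delta_o$ term without incurring any dependence on $T$.
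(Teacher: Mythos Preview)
Your proposal is correct and follows essentially the same approach as the paper's own proof: weaken the $\min\{L_{T,i},\hat{L}_{T,i}^{-}\}$ from Lemma~\ref{lem01} to $L_{T,i}$, then use Assumption~2 (Lipschitz with constant $L_1$) together with Assumption~3 to bound $L_{T,i}-L_{T,i^*}\leq L_1\sum_{t=1}^T\delta_t\leq L_1\Delta_o$, and add the two bounds. There is no substantive difference in structure or ingredients.
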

\begin{proof}
Using Lemma \ref{lem01}, consider equation (\ref{equat07}), and note that the following holds true:
\begin{equation} \label{equat45}
    \bar{L}_{T,i} - L_{T,i} \leq \frac{\eta_{\alpha} T}{8} + \frac{\log2}{\eta_{\alpha}}
\end{equation}
The above inequality can be re-written as follows:
\begin{equation} \label{equat46}
\begin{array}{cc}
    R_i^{GI}(T) = \bar{L}_{T,i} - L_{T,i^*} \leq& \frac{\eta_{\alpha} T}{8} + \frac{\log2}{\eta_{\alpha}} \\ &+ (L_{T,i} - L_{T,i^*})
\end{array}
\end{equation}
Assumption 2 leads to the following:
\begin{equation} \label{equat47}
    |l_{t,i} - l_{t,i^*}| \leq L_1 ||f_{t,i} - f_{t,i^*}||
\end{equation}
Further using assumption 3, we get:
\begin{equation} \label{equat48}
    l_{t,i} - l_{t,i^*} \leq L_1 \delta_{ii^*}(t) \leq L_1 \delta_t
\end{equation}
This implies
\begin{equation} \label{equat49}
    L_{t,i} - L_{t,i^*} \leq L_1 \sum_{t=1}^{T} \delta_{ii^*}(t) \leq L_1 \sum_{t=1}^{T} \delta_t 
\end{equation}
Since $\sum_{t=1}^{T} \delta_t \leq \Delta_o$, we get the desired result given as equation \ref{equat49.1}.
\end{proof}
\subsubsection{Agent's Social Prediction Regret w.r.t. Best Expert}
Using Lemma \ref{lem02}, consider equation (\ref{equat17}), and note that the following holds true for some $j \in \Lambda_i(T)$:
\begin{equation} \label{equat50}
    R_i^S(T) = \hat{L}_{T,i} - \bar{L}_{T,j} \leq \frac{\eta_w T}{8} + \frac{\log d_i(0)}{\eta_w}
\end{equation}
\begin{corollary} \label{corr03}
Under assumptions 1, 2, and 3, adding inequalities (\ref{equat50}) (with $j=i$) and (\ref{equat49.1}) (using Theorem \ref{thm01}), we can show that the D2EAL algorithm satisfies the following:
\begin{equation} \label{equat55}
\begin{array}{cc}
    R_i^{BE}(T) := \hat{L}_{T,i} - {L}_{T,i^*} \leq& \frac{\eta_w T}{8} + \frac{\log d_i(0)}{\eta_w} + \frac{\eta_{\alpha} T}{8} \\ &+ \frac{\log2}{\eta_{\alpha}} + L_1 \Delta_o 
\end{array}
\end{equation}
where $R_i^{BE}(T)$ is the $i^{th}$ agent's social prediction regret with respect to the best expert $i^*$ for a horizon of $T$, $\forall i \in [N]$. 
\end{corollary}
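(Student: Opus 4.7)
The plan is to derive this bound by direct addition of two previously established inequalities, using the social prediction quantity $\bar{L}_{T,i}$ as a pivot that telescopes out. First, I would invoke Lemma \ref{lem02} under Assumption 1, noting that $i \in \Lambda_i(T)$ always, so $\min_{j \in \Lambda_i(T)} \bar{L}_{T,j} \leq \bar{L}_{T,i}$; substituting this into (\ref{equat17}) specializes the social-prediction regret inequality to the form (\ref{equat50}) with $j=i$, yielding
\begin{equation*}
\hat{L}_{T,i} - \bar{L}_{T,i} \leq \frac{\eta_w T}{8} + \frac{\log d_i(0)}{\eta_w}.
\end{equation*}

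Next, I would invoke Theorem \ref{thm01} under Assumptions 2 and 3, which provides
\begin{equation*}
\bar{L}_{T,i} - L_{T,i^*} \leq \frac{\eta_{\alpha} T}{8} + \frac{\log 2}{\eta_{\alpha}} + L_1 \Delta_o.
\end{equation*}
Adding these two inequalities causes $\bar{L}_{T,i}$ to cancel on the left-hand side, producing exactly $\hat{L}_{T,i} - L_{T,i^*}$, which by definition is $R_i^{BE}(T)$. The right-hand side is the direct sum of the two right-hand sides, which is precisely the claimed bound in (\ref{equat55}).

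There is no real technical obstacle here, since the argument is a straightforward triangle-style chaining through the intermediate quantity $\bar{L}_{T,i}$. The only subtle bookkeeping step is verifying the specialization $j=i$ in Lemma \ref{lem02}, which holds because each agent is its own neighbour under the convention $\Lambda_i(t) = \Omega_i(t) \cup \{i\}$. All three assumptions are carried along: Assumption 1 is required for Lemma \ref{lem02} (so that the neighbourhood-shrinking argument in its proof is valid at horizon $T$), while Assumptions 2 and 3 are required for Theorem \ref{thm01} (so that the Lipschitz-plus-bounded-expert-disagreement slack $L_1 \Delta_o$ controls the gap between the $i^{th}$ expert and the best expert $i^*$). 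Once these are in place, the result follows by a single line of addition.
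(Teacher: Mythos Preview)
Your proposal is correct and is essentially identical to the paper's own argument: the paper simply instructs one to add inequality (\ref{equat50}) with $j=i$ to inequality (\ref{equat49.1}), and you have spelled this out explicitly, including the justification that $i\in\Lambda_i(T)$ makes the specialization $j=i$ legitimate. There is nothing to add or correct.
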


\subsubsection{Agent's Social Prediction Global Regret Analysis}

Note that if ${i^{*}}^{th}$ expert is the best expert for the horizon $T$, then its cumulative loss satisfies 
\begin{equation} \label{equat56}
    L_{T,i^*} \leq cT
\end{equation}
where $c \in [0,1]$, since the convex loss function $l(p,y) \in [0,1]$. 

\textit{Assumption 4:} The best expert is \textit{sub-linear} with respect to horizon $T$, i.e.,
\begin{equation} \label{57}
    L_{T,i^*} \leq c_0 T^{1-\alpha} \leq cT
\end{equation}
such that $c_0 \leq cT^{\alpha}$, where $\alpha \in (0,1]$.
\begin{theorem}
Under Assumptions 1, 2, 3, and 4, for an unknown signal $y_t$ with dynamics of any arbitrary structure, with the time horizon $T \geq 1$, and the learning parameters $\eta_{\alpha} > 0$ and $\eta_w > 0$, $\exists \Delta_o \geq \sum_{t=1}^{T} \delta_t$ such that the D2EAL algorithm satisfies the following:
\begin{equation} \label{equat58}
\begin{array}{ll}
     R_i^{GS}(T) := \hat{L}_{t,i} - \bar{L}_{t,j^*} \leq& \frac{\eta_w T}{8} + \frac{\log d_i(0)}{\eta_w} + \frac{\eta_{\alpha} T}{8}\\ &+ \frac{\log2}{\eta_{\alpha}} + L_1 \Delta_o + c_0 T^{1-\alpha}
\end{array}
\end{equation}
where $R_i^{GS}(T)$ is the $i^{th}$ agent's social prediction global regret, and $j^* = \arg\min_{j \in [N]} \bar{L}_{T,j}$, $\forall i \in [N]$.
\end{theorem}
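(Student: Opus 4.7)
The plan is to obtain this bound as a direct consequence of Corollary \ref{corr03} combined with Assumption 4, via a one-line algebraic decomposition. Specifically, I would write
\begin{equation*}
\hat{L}_{T,i} - \bar{L}_{T,j^*} = \bigl(\hat{L}_{T,i} - L_{T,i^*}\bigr) + \bigl(L_{T,i^*} - \bar{L}_{T,j^*}\bigr),
\end{equation*}
so that the social prediction global regret splits into the $i^{th}$ agent's social prediction regret with respect to the best expert (whose definition requires only the existence of $i^*$, which is guaranteed since $[N]$ is finite) plus a deterministic gap between the best expert's cumulative loss and the best individual prediction's cumulative loss.

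First, I would bound the first bracket using Corollary \ref{corr03} (which was itself proved by chaining Lemma \ref{lem02} at $j=i$ with Theorem \ref{thm01}). Under Assumptions 1, 2, and 3 this yields
\begin{equation*}
\hat{L}_{T,i} - L_{T,i^*} \;\leq\; \frac{\eta_w T}{8} + \frac{\log d_i(0)}{\eta_w} + \frac{\eta_{\alpha} T}{8} + \frac{\log 2}{\eta_{\alpha}} + L_1 \Delta_o,
\end{equation*}
for the same $\Delta_o \geq \sum_{t=1}^{T} \delta_t$ guaranteed by the Expert Definition.

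Second, for the residual term $L_{T,i^*} - \bar{L}_{T,j^*}$, I would simply note that $\bar{L}_{T,j^*} = \sum_{s=1}^{T} l(\bar{f}_{s,j^*},y_s) \geq 0$ because $l \in [0,1]$, hence $L_{T,i^*} - \bar{L}_{T,j^*} \leq L_{T,i^*}$. Invoking the sub-linearity Assumption 4 then gives $L_{T,i^*} \leq c_0 T^{1-\alpha}$. Adding the two pieces produces exactly the claimed inequality \eqref{equat58}.

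The main observation — rather than a genuine obstacle — is that there is nothing to prove beyond correctly stitching previously established results: Corollary \ref{corr03} absorbs all of the Hoeffding/Jensen machinery from the individual and social potential-function arguments, and Assumption 4 supplies the explicit sub-linear majorant for the best expert's cumulative loss. The only subtlety worth flagging is that one must use non-negativity of $\bar{L}_{T,j^*}$ rather than any stronger comparison between $j^*$ and $i^*$, since $j^*$ minimizes the \emph{individual predictions'} losses while $i^*$ minimizes the \emph{experts'} losses, and these minimizers need not coincide.
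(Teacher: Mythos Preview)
Your proposal is correct and matches the paper's own proof, which consists of the single line ``Using Corollary~\ref{corr03} and assumption 4, we get the desired result.'' You have simply made explicit the decomposition and the use of non-negativity of $\bar{L}_{T,j^*}$ that the paper leaves implicit.
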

\begin{proof}
Using Corollary \ref{corr03} and assumption 4, we get the desired result.
\end{proof}

\subsection{Convergence Analysis}
Consider $0 < \hat{w}_{ii}(0) \leq 1$, where $i = 1,2,\cdots,N$. 
For $t=1,2,\cdots, \bar{T}$, $\hat{w}_{ij}(t) = \hat{w}_{jj}(0) \exp\{-\eta_w \bar{L}_{t,j}\}$, $\forall j \in \Lambda_i(t)$. 
Further, define $j'_{*}(t) := \arg \min_{j'\in \Lambda_i(t)} \bar{L}_{t,j'}$, i.e., $j'_{*}(t)$ is the index of the robot which incurs the least cumulative loss among all other robots in the index set $\Lambda_i(t) = \Omega_i(t) \cup \{i\}$ at time $t$, where $\Omega_i(t)$ is the neighbours' index set of the $i^{th}$ robot at time $t$. 

Consider the weight $w_{ij}(t)$, which can be re-written as:
\begin{equation} \label{equat60}
    w_{ij}(t) =  \frac{\hat{w}_{ij}(0) \exp\{ -\eta_w (\bar{L}_{t,j}-\bar{L}_{t,j'_{*}(t)})\}}{\sum_{j'\in \Lambda_i(t)} \hat{w}_{ij'}(0) \exp\{ -\eta_w (\bar{L}_{t,j'}-\bar{L}_{t,j'_{*}(t)})\}}
\end{equation}
or,
\begin{equation} \label{equat61}
\begin{array}{cc}
    & w_{ij}(t) = \\  &\frac{\hat{w}_{ij}(0) \exp\{ -\eta_w (\bar{L}_{t,j}-\bar{L}_{t,j'_{*}(t)})\}}{\hat{w}_{ij'_{*}(t)}(0) + \sum_{j'\in \Lambda_i(t) \setminus \{j'_{*}(t)\}} \hat{w}_{ij'}(0) \exp\{ -\eta_w (\bar{L}_{t,j'}-\bar{L}_{t,j'_{*}(t)})\}}
\end{array}
\end{equation}

\textit{Assumption 5:} Cumulative loss for the $j^{th}$ agent, $\forall j \in \Lambda_i(t) \setminus \{j'_{*}(t)\}$, satisfies
\begin{equation} \label{equat62}
    t \geq \bar{\epsilon} t \geq \bar{L}_{t,j} - \bar{L}_{t,j'_{*}(t)} \geq \ubar{\epsilon} t > 0
\end{equation}
where $1 \geq \bar{\epsilon} \geq \ubar{\epsilon} > 0$. 

\textit{Assumption 6:} Both $\lim_{t \rightarrow \infty} j'_{*}(t)$ and $\lim_{t \rightarrow \infty} \Lambda_i(t)$ exist uniquely.

\begin{theorem}
Under the assumptions 5 and 6, D2EAL algorithm's weights $w_{ij}(t)$ satisfy the following:
\begin{equation} \label{equat63}
    \lim_{t \rightarrow \infty} w_{ij}(t) = 0, \quad \forall j \in \lim_{t \rightarrow \infty} \Lambda_i(t) \setminus \{j'_{*}(t)\}
\end{equation}
and
\begin{equation} \label{equat64}
    \lim_{t \rightarrow \infty} w_{ij'_{*}(t)}(t) = 1
\end{equation}
where $j'_{*}(t)$ is the index of the neighbor of the $i^{th}$ robot whose individual prediction incurs the least cumulative loss at time $t$, i.e., $j'_{*}(t) = \arg \min_{j'\in \Lambda_i(t)} \bar{L}_{t,j'}$, $\forall i \in [N]$.
\end{theorem}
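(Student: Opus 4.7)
The plan is to work directly with the closed form for $w_{ij}(t)$ given in equation (\ref{equat61}), and to show that Assumption 5 forces the off-best exponential terms to decay to zero, while Assumption 6 takes care of any transient changes in the identity of the argmin and the neighbour set.

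First I would invoke Assumption 6: since $\lim_{t\to\infty} j'_*(t)$ and $\lim_{t\to\infty}\Lambda_i(t)$ exist uniquely, there is a finite time $T_0$ beyond which $j'_*(t) = j'_*$ and $\Lambda_i(t) = \Lambda_i$ are constant (the limits are attained exactly, because both quantities take values in a discrete/finite set, so ``unique existence of a limit'' means eventual constancy). From $t \geq T_0$ onwards, equation (\ref{equat61}) reduces to
\begin{equation*}
w_{ij}(t) = \frac{\hat{w}_{ij}(0)\,\exp\{-\eta_w(\bar{L}_{t,j}-\bar{L}_{t,j'_*})\}}{\hat{w}_{ij'_*}(0) + \sum_{j'\in \Lambda_i\setminus\{j'_*\}} \hat{w}_{ij'}(0)\,\exp\{-\eta_w(\bar{L}_{t,j'}-\bar{L}_{t,j'_*})\}},
\end{equation*}
with a fixed denominator structure and constant initial weights $\hat{w}_{ij'}(0) \in (0,1]$.

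Next I would use Assumption 5 to bound the exponential gap terms. For every $j'\in \Lambda_i\setminus\{j'_*\}$, one has $\bar{L}_{t,j'}-\bar{L}_{t,j'_*} \geq \underline{\epsilon}\, t > 0$, so
\begin{equation*}
0 \leq \exp\{-\eta_w(\bar{L}_{t,j'}-\bar{L}_{t,j'_*})\} \leq \exp\{-\eta_w \underline{\epsilon}\, t\} \xrightarrow[t\to\infty]{} 0.
\end{equation*}
For the case $j = j'_*$, the gap is identically zero and the numerator equals the constant $\hat{w}_{ij'_*}(0)$. Applying these to the reduced expression, the denominator satisfies
\begin{equation*}
\hat{w}_{ij'_*}(0) \leq \sum_{j'\in\Lambda_i} \hat{w}_{ij'}(0)\,\exp\{-\eta_w(\bar{L}_{t,j'}-\bar{L}_{t,j'_*})\} \leq \hat{w}_{ij'_*}(0) + (|\Lambda_i|-1)\exp\{-\eta_w \underline{\epsilon}\, t\},
\end{equation*}
which squeezes the denominator to $\hat{w}_{ij'_*}(0) > 0$. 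Combined with the numerator analysis, passing to the limit yields $w_{ij}(t) \to 0$ for any $j \in \Lambda_i\setminus\{j'_*\}$ and $w_{ij'_*}(t)\to 1$, which is exactly (\ref{equat63})--(\ref{equat64}).

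The main subtlety, and the only non-mechanical step, is justifying the eventual constancy of $j'_*(t)$ and $\Lambda_i(t)$ from Assumption 6. The communication neighbour set $\Lambda_i(t)$ is inherently discrete-valued, so ``the limit exists uniquely'' must be interpreted as eventual stabilization rather than convergence of a continuous quantity; the same is true for the integer-valued argmin $j'_*(t)$, and once the neighbour set has stabilized, Assumption 5 (in particular the strict lower bound $\underline{\epsilon}>0$) rules out ties so that $j'_*(t)$ must also stabilize. Once this is made clean, the rest of the argument is a one-line exponential-decay squeeze using Assumption 5, and no further structural properties of the loss sequence are needed.
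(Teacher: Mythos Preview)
Your proposal is correct and follows essentially the same route as the paper: use Assumption~5 to force the off-best exponential terms in (\ref{equat61}) to vanish, and use Assumption~6 to stabilize the discrete-valued quantities $j'_*(t)$ and $\Lambda_i(t)$. The only cosmetic difference is that the paper first applies Assumption~5 to sandwich $w_{ij}(t)$ between two expressions involving $\exp\{-\eta_w\bar\epsilon t\}$ and $\exp\{-\eta_w\ubar\epsilon t\}$ and then passes to the limit under Assumption~6, whereas you first stabilize via Assumption~6 and then use only the lower gap bound $\ubar\epsilon$; your ordering is arguably cleaner, but the substance is identical.
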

\begin{proof}
Using assumption 5 on equation (\ref{equat61}), we get
\begin{equation} \label{equat65}
\begin{array}{cc}
    &\frac{\hat{w}_{ij}(0) \exp\{ -\eta_w \bar{\epsilon} t\}}{\hat{w}_{ij'_{*}(t)}(0) + \sum_{j'\in \Lambda_i(t) \setminus \{j'_{*}(t)\}} \hat{w}_{ij'}(0) \exp\{ -\eta_w \ubar{\epsilon} t\}} \\ &\leq w_{ij}(t) \leq \\  &\frac{\hat{w}_{ij}(0) \exp\{ -\eta_w \ubar{\epsilon} t\}}{\hat{w}_{ij'_{*}(t)}(0) + \sum_{j'\in \Lambda_i(t) \setminus \{j'_{*}(t)\}} \hat{w}_{ij'}(0) \exp\{ -\eta_w \bar{\epsilon} t\}}
\end{array}
\end{equation}
for $\forall j \in \Lambda_i(t) \setminus \{j'_{*}(t)\}$, and 
\begin{equation} \label{equat66}
\begin{array}{cc}
    &\frac{\hat{w}_{ij'_{*}(t)}(0)}{\hat{w}_{ij'_{*}(t)}(0) + \sum_{j'\in \Lambda_i(t) \setminus \{j'_{*}(t)\}} \hat{w}_{ij'}(0) \exp\{ -\eta_w \ubar{\epsilon} t\}} \\ &\leq w_{ij'_{*}(t)}(t) \leq \\  &\frac{\hat{w}_{ij'_{*}(t)}(0)}{\hat{w}_{ij'_{*}(t)}(0) + \sum_{j'\in \Lambda_i(t) \setminus \{j'_{*}(t)\}} \hat{w}_{ij'}(0) \exp\{ -\eta_w \bar{\epsilon} t\}}
\end{array}
\end{equation}
Now, taking $\lim_{t \rightarrow \infty}(\cdot)$ on equations (\ref{equat65}) and (\ref{equat66}), under assumption 6, leads to the desired result. 
\end{proof}
Similar to convergence analysis of weights $w_{ij}(t)$, one can also derive convergence results for weights $\alpha_i(t)$ as well.

\section{Performance Evaluation}
In this section, two simulation studies are presented. The first simulation study involves performance evaluation for $N=6$ robots, with more emphasis on how D2EAL handles adverse dynamic biases or drift in the predictions. Whereas, the second simulation study evaluates how scalable D2EAL is, as the no. of robots $N$ are increased in the MRS. 

In the first study, D2EAL is evaluated using a simulated environment with $N=6$ robots communicating over a dynamic network while performing the task of cooperative target tracking as discussed in section II, for a horizon of $T=1400$ discrete time steps and a sampling period of $\Delta T = 0.1$ second, with a discrete time-step look-ahead window of $\tau = 1$. The communication network is considered to be a random undirected graph with a link drop probability of $0.1$, whose underlying base graph is an undirected connected linear graph. Note that this choice of the base graph corresponds to the worst case for network connectivity among other undirected connected topologies. With this random graph setup for communication network, Fig.\ref{fig:03} shows the percentage frequency of link-drops that occur in the communication network for one of the simulation runs.   
\begin{figure}[h]
    \centering
    \includegraphics[width=0.45\textwidth]{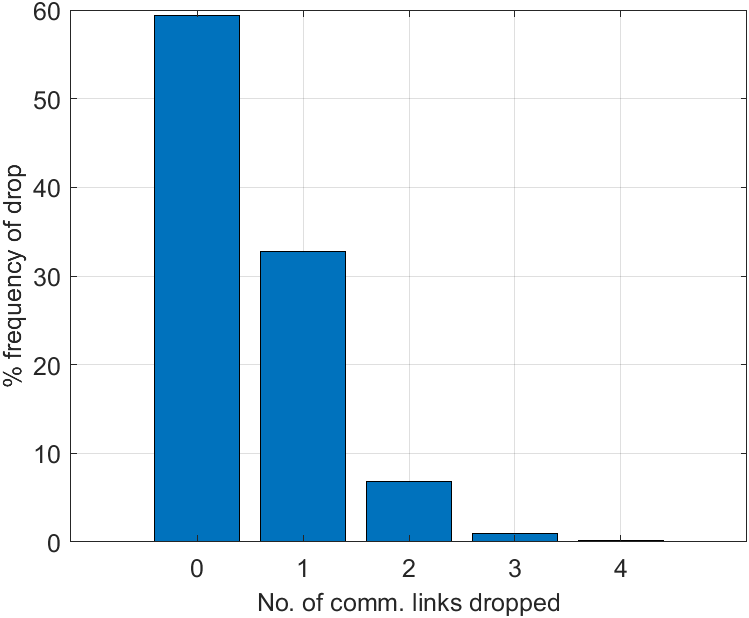}
    \caption{Percentage frequency of no. of communication links dropped over the course of 140 sec., with a link drop probability of 0.1, for a simulation run}
    \label{fig:03}
\end{figure}
The target and the robots follow the mathematical models as discussed in section II. The initial state values, control law parameters, drift terms in the prediction algorithm model, and the control input sequence for the target (considered unknown to the robots) are all set to suitable values. The loss function is defined to be $l(x,y) = \min (||x-y||/50 , 1)$, where $x,y \in \mathbb{R}^2$. Since $\tau = 1$, as per equations (\ref{eqt02.2}) and (\ref{eqt02.3}), the $i^{th}$ robot's control law uses its one-step look-ahead prediction of target's position at time $t$, $\hat{x}_{(t+1|t),Tg}^i$. As per equation (\ref{eqt04}), one-step look-ahead prediction by algorithm $A_i$ is given as:
\begin{equation} \label{eqt004}
        \hat{x}_{(t+1|t),T_g}^{A_i} = x_{t+1,T_g} + \zeta_{t,i}^{1} + \nu_{t,i}^{1} 
\end{equation}
where $x_{t+1,T_g}$ is the target's true position vector at time $t+1$, and $\zeta_{t,i}^{1} \in \mathbb{R}^2$ represents drift in algorithm $A_i$'s one-step look-ahead prediction of target's position, and $\nu_{t,i}^{1} \in \mathbb{R}^2$ is zero-mean gaussian prediction noise with covariance $C_{t,i}^{1}$, at time $t$. The drift term $\zeta_{t,i}^{1} = c_{t,i}^{1} s_t$ follows the model given by equations (\ref{eqt04.01}) and (\ref{eqt05}), and the drift reset probability is set to be $p = 0.1$. Further, we consider the term $c_{t,i}^{1} = \gamma_{t,i} [1, 1]'$, and $\nu_{t,i}^{1}$'s covariance term $C_{t,i}^{1} = (10 \cdot \gamma_{t,i})^2 diag([1,1])$, where the terms $\gamma_{t,i}$, $\forall i \in [N]$, vary with time as shown in Table \ref{tab:tbl003}. $\gamma_{t,i}$ values are indicative of how good or bad algorithm $A_i$ is at time $t$; larger $\gamma_{t,i}$ values lead to a lower prediction accuracy. From table \ref{tab:tbl003}, note the variation in the $\gamma_{t,i}$ over the horizon of $T=1400$ discrete-time steps; for instance, algorithm $A_1$ (installed in the $1^{st}$ robot) is accurate initially but its prediction degrades later on, whereas the opposite can be said about algorithm $A_6$ (installed in the $6^{th}$ robot). 
\begin{table}[H] 
    \centering
    \begin{tabular}{|c|c|c|c|c|c|c|}
    \hline
    \hline
    $\gamma_{t,i}$   & $i=1$ & $i=2$ & $i=3$ & $i=4$ & $i=5$ & $i=6$ \\
    \hline 
    \hline 
    $\gamma_{(1:T/6),i}$ & $0.01$  &  $0.1$ &  $0.1$ &  $0.2$ &  $0.4$ &  $0.8$ \\
    \hline
    $\gamma_{(T/6:T/3),i}$ & $0.01$  &  $0.1$ &  $0.2$ &  $0.1$ &  $0.3$ &  $0.6$ \\
    \hline     
    $\gamma_{(T/3:T/2),i}$ & $0.3$  &  $0.3$ &  $0.4$ &  $0.05$ &  $0.3$ &  $0.3$ \\
    \hline     
     $\gamma_{(T/2:5T/6),i}$ & $0.6$  &  $0.3$ &  $0.2$ &  $0.2$ &  $0.3$ &  $0.01$ \\
    \hline 
    $\gamma_{(5T/6:T),i}$ & $0.8$  &  $0.3$ &  $0.2$ &  $0.2$ &  $0.1$ &  $0.01$ \\
    \hline
    \end{tabular}
    \caption{Drift and Noise proportionality term $\gamma_{t,i}$ for different time duration, where $c_{t,i}^1 = \gamma_{t,i} [1,1]'$ and $C_{t,i}^{1} = (10 \cdot \gamma_{t,i})^2 diag([1,1])$, $i = 1,2,\cdots,6$.}
    \label{tab:tbl003}
\end{table}

For the above described simulation setup with an adverse setting, D2EAL is compared against three baseline (Mean, Median, Greedy-Local) and four state-of-the-art (Kalman Fusion, Bayes Fusion, Covariance Intersection, Covariance Union) decentralized prediction/estimate fusion methods, and the case with no communication among the robots, which are briefly described as follows: 
\begin{itemize}
    \item \textit{No Communication}: involves no communication among the robots; the robots just rely on their respective prediction algorithms for one-step look-ahead prediction of target's position.
    \item \textit{Greedy-Local}: each robot directly uses the one-step look-ahead prediction $\hat{f}_{t+1,i} = f_{t+1,j_*}$, which incurs the least cumulative loss among all the predictions that are shared by its neighbours and its own prediction algorithm, i.e., $j_* = \arg \min_{j\in \Lambda_i(t)} L_{t,j}$, $\forall i \in [N]$. 
    \item \textit{Mean}: each robot takes the mean of all the predictions shared by its neighbours and its own prediction algorithm, i.e., $\hat{f}_{t+1,i} = \frac{1}{d_i(t)} \sum_{\forall j \in \Lambda_i(t)} f_{t+1,j}$, where $d_i(t) = \sum_{\forall j \in  \Lambda_i(t)} (1)$, $\forall i \in [N]$. 
    \item \textit{Median}: Instead of mean, each robot takes the median of all the predictions shared by its neighbours and its own prediction algorithm. 
    \item \textit{Kalman Fusion} (\cite{maybeck1982stochastic},\cite{uhlmann2003covariance}): each robot takes the Kalman Fusion of all the predictions given by its own prediction algorithm and that of its neighbours; assumes that the predictions being fused are uncorrelated and their associated zero-mean gaussian noises' covariance ($C_{t,i}^1$) are known.
    \item \textit{Bayes Fusion} (\cite{weng2012bayesian}): each robot employs a Bayesian framework for the fusion of all the predictions given by its own prediction algorithm and that of its neighbours; assumes that the predictions being fused are consistent and their associated zero-mean gaussian noises' covariance ($C_{t,i}^1$) are known, but their cross-correlation is unknown. 
    \item \textit{Covariance Intersection} (\cite{matzka2009comparison},\cite{julier2017general}): each robot employs the Covariance Intersection method for the fusion of all the predictions given by its own prediction algorithm and that of its neighbours; assumes that the predictions being fused are consistent and their associated zero-mean gaussian noises' covariance ($C_{t,i}^1$) are known, but their cross-correlation is unknown.
    \item \textit{Covariance Union} (\cite{matzka2009comparison},\cite{reece2010generalised}): each robot employs the Covariance Union method for the fusion of all the predictions given by its own prediction algorithm and that of its neighbours; assumes that the predictions being fused can be  inconsistent and their cross-correlation is unknown, but their associated zero-mean gaussian noises' covariance ($C_{t,i}^1$) are known. 
\end{itemize}
For D2EAL algorithm, the learning parameters are set to be $\eta_{\alpha} = 2$ and $\eta_{w} = 2$ via trail and error. Both D2EAL and Greedy-Local involve a periodic reset for their weights and cumulative loss variable, respectively, after every $T_o = 200$ discrete time steps. Note that D2EAL, Greedy-Local, Mean, and Median do not require covariance information of the predictions as input. 

Snapshots for the D2EAL simulation case are shown in fig.\ref{fig:04} for four different time instants. As we can see in the figure, the six robots successfully chase the target while maintaining some distance from the target and from each other, in addition to following the heading angle requirement of making sure that their heading directions point towards next-step position of the target with sufficient accuracy. In fact, this improvement in accuracy of the one-step look-ahead prediction of target's position by all the robots has occurred due to the use of D2EAL algorithm, as is quite evident from fig.\ref{fig:05}. Note that cumulative loss for $i^{th}$ robot is given as $\hat{L}_{t,i} = \sum_{s=1}^{t} \hat{l}_{s,i}$. Fig.\ref{fig:05} shows how $6^{th}$ robot's cumulative loss ($\hat{L}_{t,6}$), averaged over $100$ simulation runs, evolves with time. Note the relative improvement in the performance of the $6^{th}$ robot for the case with D2EAL, compared to the other fusion algorithms -- approximately $44\%$ improvement compared to the best performing covariance-based method, Bayes Fusion (BF). Note that before $t=70$ sec., algorithm $A_6$ is quite inaccurate whereas its accuracy increases after $t=70$ sec., as shown in Table \ref{tab:tbl003}, which is also reflected in the plot of $6^{th}$ robot's cumulative loss in fig.\ref{fig:05}. Similar plots can be shown for other robots as well. This shows the effectiveness of D2EAL in making sure that each robot incurs sufficiently smaller prediction losses irrespective of how bad its prediction algorithm's or that of its neighbours' prediction algorithm's performance is. Fig.\ref{fig:05} also shows the evolution of total cumulative loss of all the robots ($\sum_{i=1}^{6} \hat{L}_{t,i}$) with respect to time, averaged over 100 simulation runs. For D2EAL, the total cumulative loss incurred by all the robots is significantly lesser compared to the other fusion algorithms -- around $30\%$ lesser compared to BF. This shows that D2EAL enables each robot to benefit from the robot having the best prediction information irrespective of its placement in the communication network, thus lowering the overall total cumulative loss. 
\begin{figure}[h]
    \centering
    \includegraphics[width=0.48\textwidth]{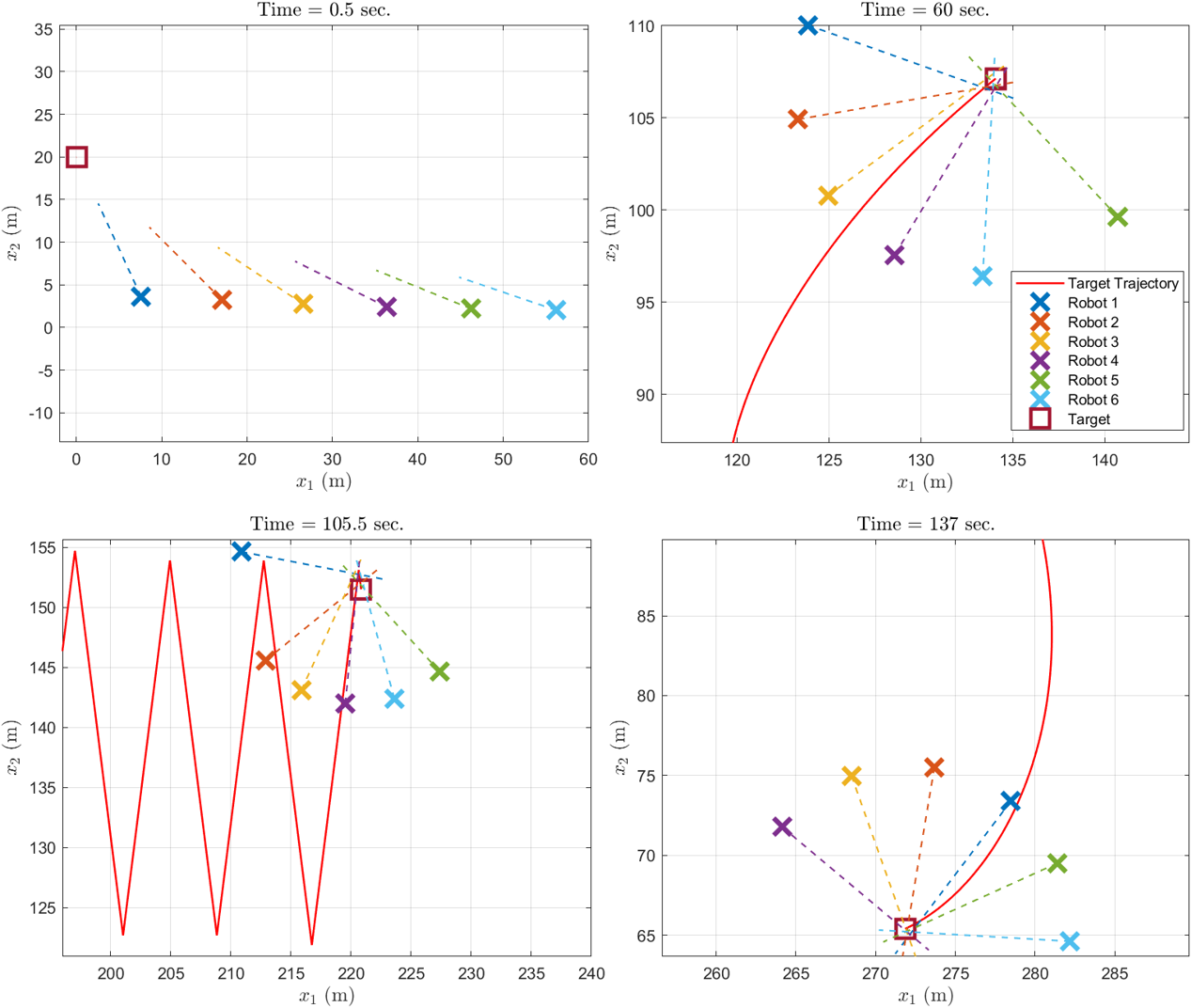}
    \caption{Snapshots of D2EAL Simulation at time instants 0.5 sec., 60 sec., 105.5 sec., and 137 sec., respectively. The dashed lines represent robots' heading direction.}
    \label{fig:04}
\end{figure}
\begin{figure}[h]
    \centering
    \includegraphics[width=0.48\textwidth]{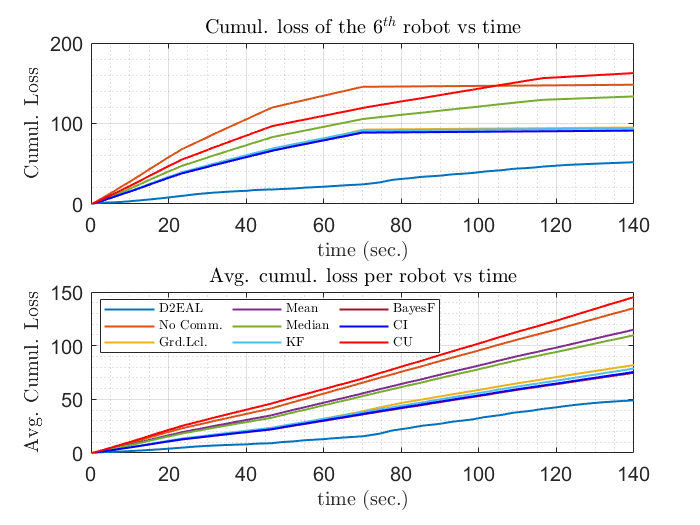}
    \caption{Cumulative loss of the $6^{th}$ robot as a function of time, and total cumulative loss of all the robots as a function of time, respectively, averaged over 100 simulation runs, with $N = 6$.}
    \label{fig:05}
\end{figure}

In the second simulation study, D2EAL is evaluated for its scalability in terms of total average cumulative loss incurred per robot at the end of horizon $T$ versus total no. of robots $N$. For the scalability study, starting with the case of $N=2$, where one robot's prediction algorithm is quite accurate ($\gamma_{t,1} = 0.01$) and the other one's prediction algorithm is inaccurate ($\gamma_{t,2} = 0.8$), we keep on adding new robots in-between the originally chosen two robots in the underlying linear graph, such that for the new robots, $\gamma_{t,i} = unif(0,2)*\frac{0.01+0.8}{2}$, where $unif(0,2)$ is a uniform random variable within the range $[0,2]$. This makes sure that for the case of no communication, the average cumulative loss per robot ($\frac{1}{N}\sum_{i=1}^{N} \hat{L}_{t,i}$) always stays $\approx175$ as $N$ is increased. Fig. \ref{fig:06} shows the plot for average cumulative loss per robot versus the total no. of robots ($N$). Note that D2EAL outperforms all the other fusion algorithms in the scalability test as well, since its average cumulative loss per robot stays substantially lower (approx. $32\%$ w.r.t. BF) than that of other algorithms as $N$ is increased. Fig. \ref{fig:06} also shows the plot for reliability cost versus total no. of robots for a typical MRS, where reliability cost is considered to be inversely proportional to the total no. of robots $N$. This is justified since increasing the total no. of robots increases a MRS's fault tolerance capability, especially in case of \textit{best} robot's failure, either in terms of its prediction or communication. As $N$ is increased, we can observe that the average cumulative prediction loss per robot stays almost the same, while the reliability cost decreases. This way, D2EAL allows a MRS to exhibit high reliability by increasing $N$, while making sure that prediction performance doesn't degrade as $N$ is increased. 
\begin{figure}[h]
    \centering
    \includegraphics[width=0.48\textwidth]{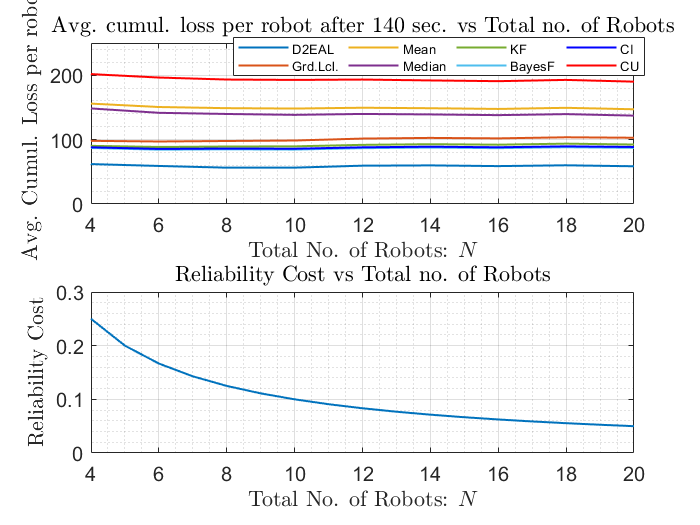}
    \caption{The first plot shows the average cumulative loss per robot for a horizon of 140 sec. as a function of total no. of robots $N$, averaged over 100 simulation runs. The second plot shows how the reliability cost of a typical multi-robot system varies with $N$.}
    \label{fig:06}
\end{figure}

From Fig. \ref{fig:05} and Fig. \ref{fig:06}, it is quite evident that D2EAL performs substantially better than the baseline as well as the state-of-the-art fusion methods. Note that the performance of Kalman Fusion (KF), Covariance Intersection (CI), and Bayes Fusion (BF) is comparable to each other, with BF slightly better than CI, which confirms with the results in \cite{weng2012bayesian}. Also, the performance of CI is slightly better than KF; CI does not assume that the predictions being fused are uncorrelated, whereas KF does. Further, Greedy-Local, a method that does not require the knowledge of prediction covariance, performs slightly worse than these three covariance-based methods as discussed above. The performance of Greedy-Local is significantly better than that of Mean and Median, which perform comparable to each other. The performance of Covariance Union (CU) is the worst among all the methods; it is even worse than the no communication case. This can attributed to the fact that in CU, while trying to keep the fused predictions consistent, the resultant fused covariance is increased - this keeps on increasing the resultant covariance of the fused (output) prediction which leads to even more inaccuracy than the input predictions. Note that D2EAL does not require the knowledge of prediction covariance. Still, D2EAL is able to outperform these covariance-based methods because of its online learning of weights based on a prediction loss feedback, combined with its two weighted fusion phases.             

\section{Conclusion}
This paper presents a novel Decentralized Distributed Expert-Assisted Learning (D2EAL) algorithm for cooperative target tracking using a heterogeneous MRS. D2EAL involves each robot running a two-layered exponentially weighted information fusion process combined with the online learning of weights based on a prediction loss feedback, thereby enabling each robot to improve its own look-ahead prediction performance as well as that of its neighbours in the communication network. D2EAL's theoretical analysis shows that under certain reasonable assumptions, the worst-case bounds on the cumulative losses grow sub-linearly with the horizon $T$, and the weights do converge as well. Simulation results show that, in an adverse setting involving large dynamic drift/bias in the predictions, D2EAL outperforms the three baseline and four well-known decentralized fusion methods considered for comparison. D2EAL is shown to be superior to these seven methods in terms of scalability as well. In both the simulation studies, D2EAL performs approximately $30\%$ better than the best performing covariance-based method, Bayes Fusion. Moreover, as $N$ is increased, the average cumulative loss per robot for D2EAL stays plateaued while the reliability cost decreases, thus showing how D2EAL's scalability allows for higher reliability in the multi-robot system. Further, note that D2EAL algorithm involves analytic expressions, which makes it computationally inexpensive and easy to implement. The current problem formulation can be extended to the case where the target is partially observable to the multi-robot system, which needs further investigation. 

\bibliographystyle{unsrt}  
\bibliography{references}

\end{document}